\newtheorem {theorem} {Theorem}
\newtheorem {lemma} {Lemma}
\newtheorem {definition} {Definition}
\newcommand{\trans}[1]{{#1}^{\ensuremath{\mathsf{T}}}}
\begin{document}

\title{Learning joint intensity-depth sparse representations\thanks{I. To\v{s}i\'c is with Ricoh Innovations, Corp., Menlo Park, USA, email: ivana@ric.ricoh.com. This work has been performed while she was with the Helen Wills Neuroscience Institute, University of California, Berkeley, USA. 
S. Drewes is with T-Systems International GmbH, Darmstadt, Germany, sarah.drewes@t-systems.com. She performed this work while she was with the Department of Industrial Engineering and Operations Research at University of California, Berkeley.}\thanks{This work has been supported by the Swiss National Science Foundation under the fellowship PA00P2-134159 awarded to I. To\v{s}i\'c.}}
\author{Ivana To\v{s}i\'c and Sarah Drewes}


\maketitle

\begin{abstract}
This paper presents a method for learning overcomplete dictionaries of atoms composed of two modalities that describe a 3D scene: image intensity and scene depth. We propose a novel Joint Basis Pursuit (JBP) algorithm that finds related sparse features in two modalities using conic programming and we integrate it into a two-step dictionary learning algorithm. JBP differs from related convex algorithms because it finds joint sparsity models with different atoms and different coefficient values for intensity and depth. This is crucial for recovering generative models where the same sparse underlying causes (3D features) give rise to different signals (intensity and depth). We give a  bound for recovery error of sparse coefficients obtained by JBP, and show numerically that JBP is superior to the Group Lasso (GL) algorithm. When applied to the Middlebury depth-intensity database, our learning algorithm converges to a set of related features, such as pairs of depth and intensity edges or image textures and depth slants. Finally, we show that JBP (with the learned dictionary) outperforms both GL and Total Variation (TV) on depth inpainting for time-of-flight 3D data.
\end{abstract}
\begin{IEEEkeywords}
Sparse approximations, dictionary learning, hybrid image-depth sensors.
\end{IEEEkeywords}

\section{Introduction}

Hybrid image-depth sensors have recently gained a lot of popularity in many vision applications. Time of flight cameras~\cite{hagebeuker3d,swissranger} provide real-time depth maps at moderate spatial resolutions, aligned with the image data of the same scene. Microsoft Kinect~\cite{kinect} also provides real-time depth maps that can be registered with color data in order to provide 3D scene representation. Since captured images and depth data are caused by the presence of same objects in a 3D scene, they represent two modalities of the same phenomena and are thus correlated. This correlation can be advantageously used for denoising corrupted or inpainting missing information in captured depth maps. Such algorithms are of significant importance to technologies relying on image-depth sensors for 3D scene reconstruction or visualization~\cite{kinect,Kubota:2007p1352}, where depth maps are usually noisy, unreliable or of poor spatial resolution. 


Solving inverse problems such as denoising or inpainting usually involves using prior information about data. Sparse priors over coefficients in learned linear generative models have been recently applied to these problems with large success~\cite{olshausen97sparse,Lewicki:2000p2757,Aharon06}. A similar approach has been proposed for learning sparse models of depth only, showing state-of-the-art performance in depth map denoising and offering a general tool for improving existing depth estimation algorithms~\cite{tosic2011learning}. However, learning sparse generative models for joint representation of depth and intensity images has not been addressed yet. Learning such models from natural 3D data is of great importance for many applications involving 3D scene reconstruction, representation and compression. 

This paper proposes a method for learning joint depth and intensity sparse generative models. Each of these two modalities is represented using overcomplete linear decompositions, resulting in two sets of coefficients. These two sets are coupled via a set of hidden variables, where each variable multiplies exactly one coefficient in each modality. Consequently, imposing a sparse prior on this set of coupling variables results in a common sparse support for intensity and depth. Each of these hidden variables can be interpreted as presence of a depth-intensity feature pair arising from the same underlying cause in a 3D scene. To infer these hidden variables under a sparse prior, we propose a convex, second order cone program named Joint Basis Pursuit (JBP). Compared to Group Lasso (GL)~\cite{Bakin99}, which is commonly used for coupling sparse variables, JBP gives significantly smaller coefficient recovery error. In addition, we bound theoretically this error by exploiting the restricted isometry property (RIP)~\cite{ca:ta:04} of the model. 
Finally, we propose an intensity-depth dictionary learning algorithm based on the new model and JBP. We show its superiority to GL in model recovery experiments using synthetic data, as well as  in inpainting experiments using real time-of-flight 3D data.

We first explain in Section~\ref{sec:badmodels} why existing models are not sufficient for intensity-depth representation. Section~\ref{sec:model} introduces the proposed intensity-depth generative model. Inference of its hidden variables is achieved via the new JBP algorithm presented in Section~\ref{sec:JBP}, while learning of model parameters is explained in Section~\ref{sec:learning}. Section~\ref{sec:sota} gives relations of the proposed JBP to prior art. Experimental results are presented in Section~\ref{sec:results}.


\section{Why aren't existing models enough?}\label{sec:badmodels}
To model the joint sparsity in intensity and depth, one might think that simple, existing models would suffice. For example, an intuitive approach would be to simply merge depth and  image pixels into one array of pixels. If we denote the vectorized form of the intensity image as $\mathbf{y}^I$ and depth image as $\mathbf{y}^D$, this "merged" model can be written as:
\[
\left[ \begin{array}{cc}
\mathbf{y}^I \\
\mathbf{y}^D
\end{array} \right]=
\left[ \begin{array}{cc}
\boldsymbol{\Phi}^I \\
\boldsymbol{\Phi}^D  \end{array} \right] \cdot
\mathbf{c}
\]
where intensity and depth are assumed to be sparse in dictionaries $\boldsymbol{\Phi}^I $, resp. $\boldsymbol{\Phi}^D$. The sparse vector $\mathbf{c}$ would then couple the sparse patterns in intensity and depth, i.e., couple intensity and depth atoms in pairs. However, since the vector of coefficients $\mathbf{c}$ is common, intensity and depth atoms within a pair will be multiplied with the same value. Let us now look at two simple synthetic examples of 3D scenes whose intensity and depth images are shown on Fig.~\ref{fig:examples}. The first example is a 3D edge and the second is a textured pattern on a slanted surface. These are two common intensity-depth features in real scenes. Since it has the flexibility of using different atoms for intensity and depth, the merged model will be able to represent both features. However, since the coefficients are common between intensity and depth, the variability in magnitude between intensity and depth would have to be represented by different atom pairs, leading to a combinatorial explosion in dictionary size. 

Another model that has been widely used in literature for representing correlated signals is the joint sparsity model, where signals share the same sparse support in $\boldsymbol{\Phi}$, but with different coefficients:
\[
\left[ \begin{array}{cc}
\mathbf{y}^I \\
\mathbf{y}^D
\end{array} \right]=
\boldsymbol{\Phi} \cdot
\left[ \begin{array}{cc}
\mathbf{a}\\
\mathbf{b}\end{array} \right].
\]
Therefore, the property of this model is that signals are represented using the same atoms multiplied by different coefficients. Obviously, the joint sparsity model would be able to represent the intensity-depth edge in Fig.~\ref{fig:examples} using a piecewise constant atom and different coefficients for intensity and depth. However, in the slanted texture example, because the depth image is linear and the intensity is a chirp, no atom can model both. The joint sparsity model would then have to decouple these two features in different atoms, which is suboptimal for representing slanted textures.

It becomes clear that we need a model that allows joint representation with different atoms and different coefficients, but with a common sparse support (the pattern of non-zero coefficients needs to be the same). We introduce such a model in the next section.

\begin{figure}[!tbp]
\begin{center}
\includegraphics[width=0.8\columnwidth]{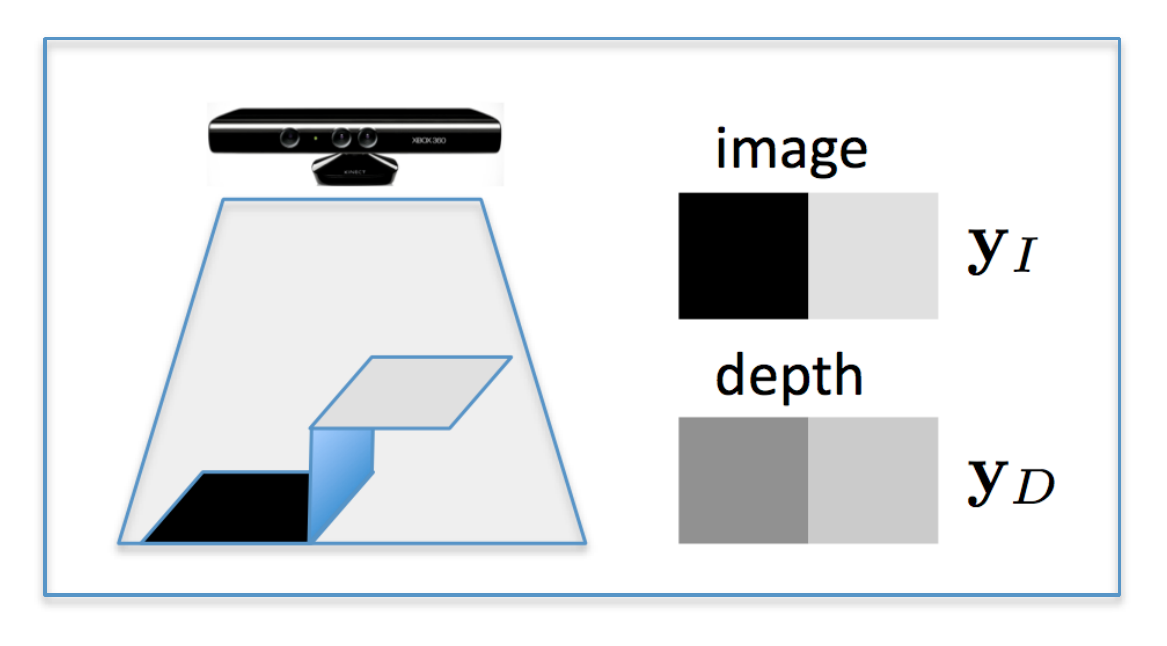}\\
\mbox{ (a) }\\
~\\
\includegraphics[width=0.8\columnwidth]{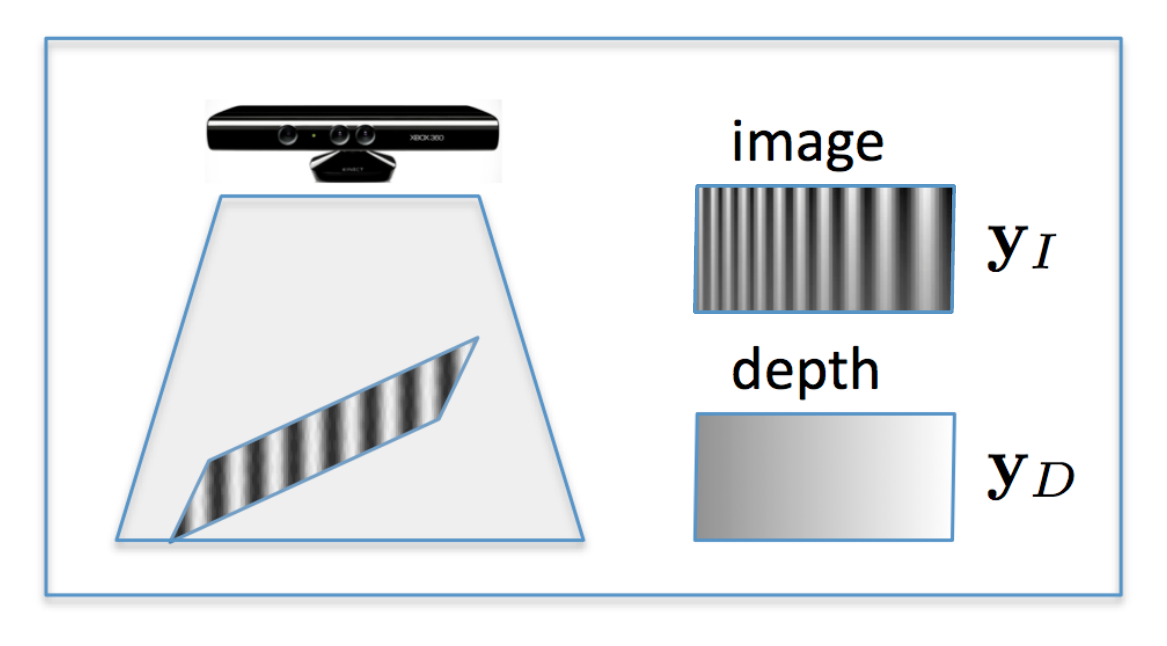}\\
\mbox{ (b)}\\
\end{center}
\caption{Examples of two typical image-depth features in 3D scenes. (a) Example 1: 3D edge, (b) Example 2: slanted texture.}
\label{fig:examples}
\end{figure}

\section{Intensity-depth generative model}\label{sec:model}

Let us first set the notation rules. Throughout the rest of the paper, vectors are denoted with bold lower case letters and matrices with bold upper case letters. Letters $I,D$ in superscripts refer to intensity and depth, respectively. Sets are represented with calligraphic fonts. Column-wise and row-wise concatenations of vectors $\mathbf{a}$ and $\mathbf{b}$ are denoted as $[\mathbf{a}\hspace{0.2cm} \mathbf{b}]$ and $[\mathbf{a}; \mathbf{b}]$, respectively. 

\begin{figure*}[!th]
\begin{center}
\includegraphics[width=0.6\textwidth]{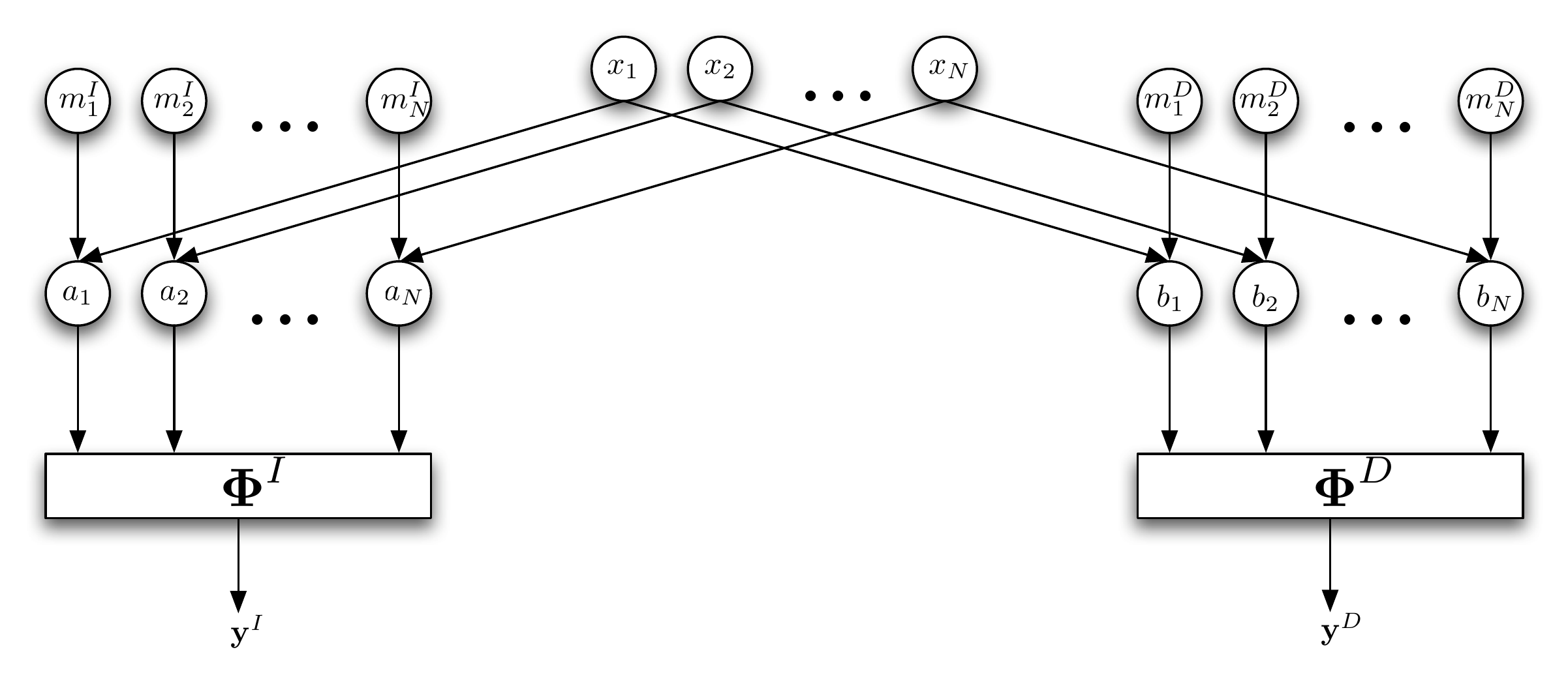}
\caption{Graphical representation of the proposed intensity-depth generative model.}
\label{fig:gm}
\end{center}
\end{figure*}

Graphical representation of the proposed joint depth-intensity generative model is shown in Fig.~\ref{fig:gm}. Intensity image $\mathbf{y}^I$ and depth image $\mathbf{y}^D$ (in vectorized forms) are assumed to be sparse in dictionaries $\boldsymbol{\Phi}^I $, resp. $\boldsymbol{\Phi}^D$, i.e., they are represented as linear combinations of dictionary atoms $\{ \boldsymbol{\phi}^I_i\}_{i \in \mathcal{I}}$ and $\{ \boldsymbol{\phi}^D_i\}_{i \in \mathcal{I}}$, resp. :
\begin{align}
\label{eq:images}
    \mathbf{y}^I&  = \boldsymbol{\Phi}^I \mathbf{a} +  \boldsymbol{\eta}^I = \sum_{i \in \mathcal{I}_0} \boldsymbol{\phi}^I_i a_i +  \boldsymbol{\eta}^I\nonumber\\
    \mathbf{y}^D&  = \boldsymbol{\Phi}^D \mathbf{b} +  \boldsymbol{\eta}^D = \sum_{i \in \mathcal{I}_0}  \boldsymbol{\phi}^D_i b_i +  \boldsymbol{\eta}^D,
\end{align}
where vectors $\mathbf{a}$ and $\mathbf{b}$ have a small number of non-zero elements and $ \boldsymbol{\eta}^I$ and $ \boldsymbol{\eta}^D$ represent noise vectors. $\mathcal{I}_0$ is the set of indexes identifying the columns (i.e., atoms) of $\boldsymbol{\Phi}^I $ and $\boldsymbol{\Phi}^D$ that participate in sparse representations of  $\mathbf{y}^I$ and $\mathbf{y}^D$. Its cardinality is much smaller than the dictionary size, hence $|\mathcal{I}_0| \ll |\mathcal{I}|$, where $\mathcal{I}=\{1,2,...,N\}$ denotes the index set of all atoms. This means that each image can be represented as a combination of few, representative features described by atoms, modulated by their respective coefficients. Because depth and intensity features correspond to two modalities arising from the same 3D features, we model the coupling between coefficients $a_i$ and $b_i$ through latent variables $x_i$ as:
\begin{align}\label{eq:coefs}
 a_i = m^I_i x_i; \hspace{0.5cm} b_i = m^D_i x_i, \hspace{0.4 cm} \forall i \in \mathcal{I},
\end{align}
where the variables $m_i^I, m_i^D$ represent the magnitudes of the sparse coefficients and $x_i$ represent the activity of these coefficients. Ideally, these variables should be binary, $0$ representing the absence and $1$ representing the presence of a depth-intensity feature pair. In that case $\sum_i x_i $ counts the number of non-zero such pairs.  However, inference of binary values represents a combinatorial optimization problem of high complexity which depends on dictionary properties and the permission of noise, cf. \cite{ca:ro:te:05}. We relax the problem by allowing $x_i$ to attain continuous values between $0$ and $1$, which has been proven to provide a very good approximation in a similar context, cf., e.g., \cite{do:04,jo:pf:07}.

By assuming that the vector $\mathbf{x}=\trans{(x_1,x_2,...,x_N)}$ is sparse, we assume that $\mathbf{y}^I$ and $\mathbf{y}^D$ are described by a small number of feature pairs $(\boldsymbol{\phi}^I_i,\boldsymbol{\phi}^D_i)$ that are either prominent in both modalities (both $m_i^I$ and $m_i^D$ are significant) or in only one modality (either $m_i^I$ or $m_i^D$ is significant). In these cases $x_i$ is non-zero, which leads to non-zero values for either $a_i$ or $b_i$, or both. If $x_i$ is zero, both $a_i$ and $b_i$ are also zero. Hence, the sparsity assumption on $\mathbf{x}$ enforces a compact description of both modalities by using simultaneously active coefficients. In addition, when such pairs cannot approximate both images, the model also allows only one coefficient within a pair to be non-zero. Therefore, the model represents intensity and depth using a small set of joint features and a small set of independent features. The main challenge is to simultaneously infer the latent variables $\mathbf{x}$, $\mathbf{m}^I=\trans{(m^I_1,m^I_2,...,m^I_N)}$ and $\mathbf{m}^D=\trans{(m^D_1,m^D_2,...,m^D_N)}$ under the sparsity assumption on $\mathbf{x}$. In the next section we propose a convex algorithm that solves this problem.

\section{Joint Basis Pursuit}\label{sec:JBP}
Let us re-write the intensity-depth generative model, including all unknown variables, in matrix notation as:
\[
\left[ \begin{array}{cc}
\mathbf{y}^I \\
\mathbf{y}^D
\end{array} \right]=
\left[ \begin{array}{c @{\hspace{0.4 cm}} c}
\boldsymbol{\Phi}^I  & 0 \\
0 & \boldsymbol{\Phi}^D  \end{array} \right] \cdot
\left[ \begin{array}{cc}
\mathbf{M}^I \\
\mathbf{M}^D
\end{array} \right]\cdot
\mathbf{x}+
\left[ \begin{array}{cc}
\boldsymbol{\eta}^I \\
\boldsymbol{\eta}^D
\end{array} \right],
\]
where $\mathbf{M}^I = \text{diag}{(m^I_1, m^I_2,..., m^I_N)}    $ and $\mathbf{M}^D= \text{diag}{(m^D_1, m^D_2,..., m^D_N)}$. Suppose first that we know dictionaries $\boldsymbol{\Phi}^I$ and $\boldsymbol{\Phi}^D$ and we want to find joint sparse representations of intensity and depth, i.e., to solve for variables $\mathbf{x}, \mathbf{m}^I, \mathbf{m}^D$. To do this, we formulate the following optimization problem:
\begin{align}
\text{OPT1}: \hspace{0.1 cm} &\min{\sum_i x_i}, \hspace{0.2 cm} \text{where}\hspace{0.1 cm} x_i \in [0,1],\hspace{0.1 cm} i=1,...,N\nonumber\\
\text{subject to:} \hspace{0.1 cm} &\| \mathbf{y}^I - \boldsymbol{\Phi}^I \mathbf{M}^I \mathbf{x} \|^2 \leq (\epsilon^I)^2 \label{o1:6}\\
&\| \mathbf{y}^D - \boldsymbol{\Phi}^D\mathbf{M}^D \mathbf{x} \|^2 \leq (\epsilon^D)^2 \label{o1:7}\\
& |m^I_i| \leq U^I \label{o1:8}\\
& |m^D_i| \leq U^D \label{o1:9}
\end{align}
where $\epsilon^I$, $\epsilon^D$ are allowed approximation errors and $U^I$ and $U^D$ are upper bounds on the magnitudes $\mathbf{m}^I$ and $\mathbf{m}^D$. In practice, the values of these upper bounds can be chosen as arbitrarily high finite values. This optimization problem is hard to solve using the above formulation, since the first two constraints are non-convex due to the terms $\mathbf{M}^I \mathbf{x}$ and $\mathbf{M}^D \mathbf{x}$ which are bilinear in the variables $\mathbf{x}$, $\mathbf{m}^I$ and $\mathbf{m}^D$. To overcome this issue, we transform it into an equivalent problem by introducing the change of variables given by Eqs.~(\ref{eq:coefs}) deriving:
\begin{align}
\text{OPT2}: \hspace{0.2 cm} &\min{\sum_i x_i}, \hspace{0.2 cm} \text{where}\hspace{0.1 cm} x_i \in [0,1],\hspace{0.1 cm} i=1,...,N\nonumber\\
\text{subject to:} \hspace{0.4 cm} &\| \mathbf{y}^I - \boldsymbol{\Phi}^I \mathbf{a} \|^2 \leq (\epsilon^I)^2 \label{o2:11}\\
&\| \mathbf{y}^D - \boldsymbol{\Phi}^D\mathbf{b} \|^2 \leq (\epsilon^D)^2 \label{o2:12}\\
& |a_i| \leq U^I x_i \label{o2:13}\\
& |b_i| \leq U^D x_i, \label{o2:14}
\end{align}
which is a convex optimization problem with linear and quadratic constraints that can be solved efficiently, i.e., in polynomial time, using log-barrier algorithms, cf. \cite{Tsuchiya98aconvergence,an:ro:te:03}.  A variety of free and commercial software packages are available like IBM ILOG CPLEX \cite{cplex}, that we use in our experiments. \\

The problems (OPT1) and (OPT2) are indeed equivalent using the variable transformation in Eqs.~(\ref{eq:coefs}) as follows. 
\begin{lemma}
For any optimal solution $(\mathbf{x}^*,\mathbf{a}^*, \mathbf{b}^*)$ of (OPT2), $\mathbf{x}^*$ is also an optimal solution to (OPT1) with corresponding matrices $ (\mathbf{M^I})^{*}$, $ (\mathbf{M^D})^{*}$ according to \eqref{eq:coefs}. \\ 
Also, any optimal solution $(\mathbf{x}^*, (\mathbf{M^I})^{*}, (\mathbf{M^D})^{*})$  of (OPT1) defines an optimal solution $(\mathbf{x}^*,\mathbf{a}^*, \mathbf{b}^*)$ to (OPT2) .
\end{lemma}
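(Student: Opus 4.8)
The plan is to prove the two directions simultaneously by showing that the change of variables \eqref{eq:coefs} sets up a correspondence between the feasible sets of (OPT1) and (OPT2) that leaves the objective $\sum_i x_i$ unchanged; once that is in place, optimality transfers for free in both directions.

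First I would establish the easy half of the correspondence. Given any point $(\mathbf{x},\mathbf{M}^I,\mathbf{M}^D)$ feasible for (OPT1), define $\mathbf{a},\mathbf{b}$ by $a_i=m^I_i x_i$ and $b_i=m^D_i x_i$ exactly as in \eqref{eq:coefs}, so that $\boldsymbol{\Phi}^I\mathbf{a}=\boldsymbol{\Phi}^I\mathbf{M}^I\mathbf{x}$ and $\boldsymbol{\Phi}^D\mathbf{b}=\boldsymbol{\Phi}^D\mathbf{M}^D\mathbf{x}$; hence the quadratic constraints \eqref{o1:6}--\eqref{o1:7} become literally \eqref{o2:11}--\eqref{o2:12}. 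Since $x_i\in[0,1]$ is nonnegative, $|a_i|=|m^I_i|\,x_i\le U^I x_i$ follows from \eqref{o1:8}, and likewise $|b_i|\le U^D x_i$ from \eqref{o1:9}, so \eqref{o2:13}--\eqref{o2:14} hold. Thus $(\mathbf{x},\mathbf{a},\mathbf{b})$ is feasible for (OPT2) with the same objective value $\sum_i x_i$.

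Next I would treat the reverse map, which is the only place where any care is needed, because the relation $a_i=m^I_i x_i$ cannot be inverted when $x_i=0$. Given $(\mathbf{x},\mathbf{a},\mathbf{b})$ feasible for (OPT2), I set $m^I_i=a_i/x_i$ and $m^D_i=b_i/x_i$ whenever $x_i>0$, and $m^I_i=m^D_i=0$ (any value in the admissible box would do) when $x_i=0$. The key observation is that \eqref{o2:13}--\eqref{o2:14} force $a_i=b_i=0$ precisely when $x_i=0$, so in every case $m^I_i x_i=a_i$ and $m^D_i x_i=b_i$, while $|m^I_i|\le U^I$ and $|m^D_i|\le U^D$ hold by \eqref{o2:13}--\eqref{o2:14} on $\{x_i>0\}$ and trivially otherwise. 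Hence $\boldsymbol{\Phi}^I\mathbf{M}^I\mathbf{x}=\boldsymbol{\Phi}^I\mathbf{a}$ and $\boldsymbol{\Phi}^D\mathbf{M}^D\mathbf{x}=\boldsymbol{\Phi}^D\mathbf{b}$, so $(\mathbf{x},\mathbf{M}^I,\mathbf{M}^D)$ is feasible for (OPT1) with the same objective value.

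Finally I would close the argument. Both constructions preserve $\mathbf{x}$, and the objective $\sum_i x_i$ depends on $\mathbf{x}$ only; therefore the two problems share the same optimal value, and any optimizer of one is mapped, via the appropriate construction above, to a feasible point of the other attaining that value, i.e.\ to an optimizer. Concretely, if $(\mathbf{x}^*,\mathbf{a}^*,\mathbf{b}^*)$ solves (OPT2) and the associated $(\mathbf{x}^*,(\mathbf{M}^I)^*,(\mathbf{M}^D)^*)$ were \emph{not} optimal for (OPT1), a strictly better (OPT1)-feasible point would yield, by the first construction, a strictly better (OPT2)-feasible point, contradicting optimality; the converse implication is symmetric. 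I expect the bookkeeping around the case $x_i=0$ in the reverse map to be the only genuinely nontrivial step — everything else is a direct substitution.
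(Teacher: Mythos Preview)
Your argument is correct and follows essentially the same route as the paper: you show that the substitution \eqref{eq:coefs} makes the constraint sets of (OPT1) and (OPT2) correspond while preserving the objective, and then conclude that optimizers map to optimizers. The paper's proof is terser and simply asserts that the constraint pairs \eqref{o1:6}--\eqref{o1:7} versus \eqref{o2:11}--\eqref{o2:12} and \eqref{o1:8}--\eqref{o1:9} versus \eqref{o2:13}--\eqref{o2:14} are equivalent under the change of variables (using nonnegativity of $x_i$ for the latter); your explicit treatment of the $x_i=0$ case in the reverse map fills in a detail the paper leaves implicit.
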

\begin{proof}
For any $(\mathbf{x}^*,\mathbf{a}^*, \mathbf{b}^*)$  and corresponding $ (\mathbf{M^I})^{*}$, $ (\mathbf{M^D})^{*}$ that satisfy Eqs.~(\ref{eq:coefs}), conditions \eqref{o2:11} and \eqref{o2:12} are equivalent to \eqref{o1:6} and \eqref{o1:7} by definition. Moreover, since $\mathbf{x}^*$ is nonnegative, conditions \eqref{o2:13} and \eqref{o2:14} are equivalent to \eqref{o1:8} and \eqref{o1:9}. Hence, any $\mathbf{x}^*$ that is optimal for (OPT2) with corresponding $(\mathbf{a}^*, \mathbf{b}^*)$ is optimal for (OPT1) with corresponding  $ (\mathbf{M^I})^{*}$, $ (\mathbf{M^D})^{*}$  and vice versa. 
\end{proof}

An immediate consequence of the form of the objective function and constraints in (OPT2) is that $\mathbf{x}^{*}$ is chosen such that \eqref{o2:13} and \eqref{o2:14} are both feasible and at least one of them is active. Formally, this is stated by the following lemma.
\begin{lemma}\label{lm:bound_activity}
For any optimal solution $(\mathbf{x}^*,\mathbf{a}^*, \mathbf{b}^*)$ of (OPT2), at least one of the constraints \eqref{o2:13} and \eqref{o2:14} is active for each component $i$, hence we have
\begin{equation}
\label{eq:max}
x_{i}^{*} = \max \{ \frac{|a^{*}_{i}|}{U^{I}}, \frac{|b^{*}_{i}|}{U^{D}}  \}, \hspace{0.5cm} \forall i=1,...,N.
\end{equation}

 \end{lemma}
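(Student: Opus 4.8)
The plan is to prove the claim by a one-coordinate perturbation argument, exploiting the fact that the change of variables used to pass from (OPT1) to (OPT2) has decoupled $\mathbf{x}$ from the data-fidelity constraints. Fix a component index $i$ and suppose, for contradiction, that at an optimal solution $(\mathbf{x}^*,\mathbf{a}^*,\mathbf{b}^*)$ neither \eqref{o2:13} nor \eqref{o2:14} is active at $i$, i.e. $|a_i^*| < U^I x_i^*$ and $|b_i^*| < U^D x_i^*$. Since $U^I$ and $U^D$ are strictly positive, both inequalities in particular force $x_i^* > 0$, so there is room to decrease this coordinate.

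Next I would form the perturbed point that agrees with $(\mathbf{x}^*,\mathbf{a}^*,\mathbf{b}^*)$ everywhere except that $x_i^*$ is replaced by $x_i^* - \delta$, with $\delta > 0$ chosen small. The point of working with (OPT2) rather than (OPT1) is that $\mathbf{x}$ does not appear in the quadratic constraints \eqref{o2:11}--\eqref{o2:12}, so these remain satisfied verbatim. For $\delta$ small enough, $x_i^* - \delta$ still lies in $[0,1]$ (here we use $x_i^* > 0$), and by the two \emph{strict} inequalities above, continuity gives $|a_i^*| \le U^I(x_i^* - \delta)$ and $|b_i^*| \le U^D(x_i^* - \delta)$, while all constraints indexed by $j \ne i$ are untouched. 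Hence the perturbed point is feasible for (OPT2), yet its objective equals $\sum_j x_j^* - \delta$, strictly below the optimal value --- a contradiction. Therefore for every $i$ at least one of \eqref{o2:13}, \eqref{o2:14} is active.

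Finally I would read off \eqref{eq:max}. Feasibility of \eqref{o2:13} and \eqref{o2:14} gives $x_i^* \ge |a_i^*|/U^I$ and $x_i^* \ge |b_i^*|/U^D$, hence $x_i^* \ge \max\{|a_i^*|/U^I,\,|b_i^*|/U^D\}$; activity of one of the two constraints gives $x_i^*$ equal to $|a_i^*|/U^I$ or to $|b_i^*|/U^D$, hence $x_i^* \le \max\{|a_i^*|/U^I,\,|b_i^*|/U^D\}$. Combining the two inequalities yields the asserted equality. The degenerate case $x_i^* = 0$ is automatically consistent: then $a_i^* = b_i^* = 0$ and both constraints hold with equality.

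As for difficulty, there is no real obstacle here: the only thing that needs care is verifying that the downward perturbation of $x_i^*$ keeps every constraint feasible, and this reduces to two observations --- (a) $\mathbf{x}$ is absent from the fidelity constraints in (OPT2), and (b) $x_i^*$ is strictly positive whenever both coupling constraints are slack. Both are immediate, so the ``hard part'' is merely bookkeeping: making sure that nothing other than the objective is affected by the perturbation.
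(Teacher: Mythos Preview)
Your argument is correct and is precisely the fleshed-out version of the paper's own proof, which consists of the single sentence ``Otherwise it would be a contradiction to the optimality of $\mathbf{x}^{*}$.'' Your perturbation step, together with the observation that $\mathbf{x}$ does not enter the data-fidelity constraints of (OPT2), is exactly what that sentence is implicitly invoking.
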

\begin{proof}
Otherwise it would be a contradiction to the optimality of $\mathbf{x}^{*}$.
\end{proof}
~\\
In the following, we refer to the optimization problem (OPT2) as Joint Basis Pursuit (JBP), where $\mathbf{x}$ is the vector of joint (coupling) variables in the signal model. It is important to know the theoretical bounds on the norm of the difference between the solution $(\mathbf{a}^*, \mathbf{b}^*)$ found by JBP and the true coefficients $(\mathbf{a}, \mathbf{b})$ of the model (\ref{eq:images}). 

Based on the non-coupled case that is treated in \cite{ca:ro:te:05}, we develop bounds on the difference of the optimal solution of (OPT2) and a sparse signal to be recovered.
For this purpose, we assume that the matrix
\begin{equation}
\label{eq:matA}
\mathbf{A}:= \left[\begin{array}{cc} \boldsymbol{\Phi^{I}} & \mathbf{0} \\ \mathbf{0} & \boldsymbol{\Phi^{D}} \end{array}\right]
\end{equation}
satisfies the restricted isometry property with a constant $\delta_S$. This property of a linear system is defined as follows. Denote $\mathbf{A}_\mathcal{T}$, $\mathcal{T} \subset {1,...,n}$ as the $n\times |\mathcal{T}|$ submatrix obtained by extracting the columns of $\mathbf{A}$ corresponding to the indices in set $\mathcal{T}$, and $|\cdot |$ denotes the cardinality of the set. The S-restricted isometry constant $\delta_S$ is then defined as:
\begin{definition}~\cite{ca:ta:04}
\label{def:isometry}
The S-restricted isometry constant $\delta_S$ of $\mathbf{A}$ is the smallest quantity such that
\begin{equation}
\label{eq:RIP}
(1-\delta_S)\|\mathbf{s} \|_2^2 \leq \|\mathbf{A}_\mathcal{T} \mathbf{s}\|_2^2 \leq (1+\delta_S)\|\mathbf{s} \|_2^2
\end{equation}
for all subsets $\mathcal{T}$ with $|\mathcal{T} | \leq S$ and coefficient sequences $(s_j )$, $j\in \mathcal{T}$ . 
\end{definition}
When $\delta_S<<1$, this property requires that every set of columns with cardinality less than $S$ approximately behaves like an orthonormal system. It can thus be related to the maximal value of the inner product between any two columns in the matrix $\mathbf{A}$, usually called the coherence of the dictionary:
\begin{equation}
\label{eq:coherence}
\mu = \max_{i,j\neq i} | \langle \boldsymbol{\phi}_i,\boldsymbol{\phi}_j \rangle |,
\end{equation}
where $\boldsymbol{\phi}_i$ and $,\boldsymbol{\phi}_j$ are two different atoms in the dictionary (i.e., two columns of $\mathbf{A}$) and $\langle \cdot \rangle$ denotes the inner product. With this definition, it can be easily shown that $\delta_S = \mu(|\mathcal{T}|-1)$ satisfies the RIP inequality~(\ref{eq:RIP}).

Before we present the bound on the coefficient recovery error of JBP, let us first define some prerequisites. Assume we are given a pair of sparse signals $(\mathbf{y}^I, \mathbf{y}^D)$ as in Eq.~(\ref{eq:images}), with sparse coefficients $(\mathbf{a}^{0},\mathbf{b}^{0})$, which satisfy constraints \eqref{o2:11} and \eqref{o2:12}. Let $\mathcal{T}_{0}$ be the support of $\mathbf{x}^{0}$ which is at the same time the support of at least $\mathbf{a}^{0}$ or $\mathbf{b}^{0}$ and contains the support of the other one or it coincides with the support of both.  Without loss of generality, let us assume that 
\begin{equation}
\label{eq:f0}
\|\mathbf{y}^I \|_2=\|\mathbf{y}^D\|_2=:f_0,
\end{equation}
 which can be easily obtained by normalization. Assume also that the components of $\mathbf{a}^0$ and $\mathbf{b}^0$ satisfy the bound constraints\footnote{Although the assumption in Eq.~(\ref{eq:ass_coef}) does not hold in general, in practical applications using learned dictionaries we found that it is always satisfied. However, if one wants to use a bound that is surely satisfied, one should choose $U=f_0/\sigma_{min}$, where $\sigma_{min}$ is the smallest of all singular values of $\boldsymbol{\Phi}^I$ and $\boldsymbol{\Phi}^D$.}
\begin{align}\label{eq:ass_coef}
 |a^0_i| \leq f_0, \hspace{0.2cm} |b^{0}_i| \leq f_0, \; \forall i=1,...,N,
\end{align}
i.e., in the remainder of the paper we assume the same bounds on $a_i$ and $b_i$: $U^{I} = U^{D} = U = f_0$. It is also useful in practice to select the approximation error $\epsilon$ in terms of the fraction of the total signal energy, so we denote $\epsilon = \eta f_0$, where $0 \leq \eta < 1$.

Let further $\alpha_i$ denote the scale between the smaller and larger coefficient for each index $i$ within the sparse support set $\mathcal{T}_0$, i.e.:
\begin{equation}
\label{eq:alpha}
\alpha_i = \min \{ \frac{|a^0_i|}{|b^0_i|},\frac{|b^0_i|}{|a^0_i|} \}, \hspace{0.5cm} \forall i \in \mathcal{T}_0,
\end{equation}
and let $\gamma$ denote:
\begin{equation}
\label{eq:gamma}
\gamma = 1- \min_{i \in \mathcal{T}_0}  \alpha_i.
\end{equation}
Parameter $\gamma$ describes the level of similarity between the sparse coefficients in the two signals, which is decreasing with higher similarity. In the trivial case when $a^0_i = b_i^0$, $\forall i \in \mathcal{T}_0$ we have that $\gamma = 0$. In all other cases $\gamma \leq 1$. 

Let further $\mathbf{x}^{0}$ denote an auxiliary vector that satisfies
\[
\max\{|a^{0}_i|,|b^{0}_i|\} =  U x^{0}_{i}, \hspace{0.5cm} \forall i \in \mathcal{T}_0
\]
namely $(\mathbf{x}^{0}, \mathbf{a}^{0}, \mathbf{b}^{0})$ is  a feasible solution to (OPT2), where $\mathbf{x}^{0}$ is chosen such that \eqref{o2:13} and \eqref{o2:14} are both feasible and (at least) one of them is active.

Finally, let $(\mathbf{x}^*, \mathbf{a}^*, \mathbf{b}^*)$ be an optimal solution to (OPT2). Then we have the following worst case bound on the distance of these.
\begin{theorem}\label{thm:bound}
Let $(\mathbf{a}^{0}, \mathbf{b}^{0})$ and $(\mathbf{a}^*, \mathbf{b}^*)$ as defined above and choose $U=f_0$ with $f_{0}$ from \eqref{eq:f0} and $\epsilon^I = \epsilon^D = \eta f_0$, where $0 \leq \eta < 1$. Then
\begin{equation}\label{eq:bound}
\| [\mathbf{a}^{0};\mathbf{b}^{0}]  - [\mathbf{a}^*; \mathbf{b}^*] \|_{2}^2 \leq \left[ \frac{|\mathcal{T}_0|}{M}(C+\gamma\sqrt{|\mathcal{T}_0|})^2 + C^2 \right] f_0^2
\end{equation}
holds for a constant $C$ that depends on the signal model parameter $\gamma$, the sparse support size $|\mathcal{T}_0|$ and the approximation parameter $\eta$, and where the $M$-restricted isometry property is satisfied for the linear system, cf. Def. \ref{def:isometry}. In particular, we have:
\begin{equation}
\label{eq:constC}
C = \frac{4\eta \sqrt{M} + \gamma |\mathcal{T}_0| \sqrt{1+\delta_M}}{\sqrt{M(1-\delta_{M+|\mathcal{T}_0|})}-\sqrt{|\mathcal{T}_0|(1+\delta_M)}}.
\end{equation}
\end{theorem}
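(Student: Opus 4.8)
The plan is to adapt the restricted-isometry argument of \cite{ca:ro:te:05} to the coupled setting, carrying out the estimate separately in each modality but driving it with a cone inequality specific to the JBP objective. Write $\mathbf{h}^a := \mathbf{a}^* - \mathbf{a}^{0}$ and $\mathbf{h}^b := \mathbf{b}^* - \mathbf{b}^{0}$, so that the left-hand side of \eqref{eq:bound} is $\|\mathbf{h}^a\|_2^2 + \|\mathbf{h}^b\|_2^2$. First I would record the data-fidelity consequence: since both $(\mathbf{a}^{0},\mathbf{b}^{0})$ and $(\mathbf{a}^*,\mathbf{b}^*)$ satisfy \eqref{o2:11}--\eqref{o2:12}, the triangle inequality gives $\|\boldsymbol{\Phi}^I\mathbf{h}^a\|_2 \le 2\epsilon^I = 2\eta f_0$ and $\|\boldsymbol{\Phi}^D\mathbf{h}^b\|_2 \le 2\epsilon^D = 2\eta f_0$.

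The key step -- and the point at which the proof departs from \cite{ca:ro:te:05} -- is to convert the optimality of JBP into a near-$\ell_1$-minimality statement for each modality. By Lemma~\ref{lm:bound_activity} and the choice $U^I=U^D=U=f_0$, the optimal objective equals $\sum_i x_i^* = \tfrac{1}{f_0}\sum_i \max\{|a_i^*|,|b_i^*|\}$, and feasibility of $(\mathbf{x}^{0},\mathbf{a}^{0},\mathbf{b}^{0})$ together with optimality of $\mathbf{x}^*$ yields $\sum_i \max\{|a_i^*|,|b_i^*|\} \le \sum_{i\in\mathcal{T}_0}\max\{|a_i^{0}|,|b_i^{0}|\}$. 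Since $|a_i^*| \le \max\{|a_i^*|,|b_i^*|\}$ the left side dominates $\|\mathbf{a}^*\|_1$; on the right I would write $\sum_{i\in\mathcal{T}_0}\max\{|a_i^{0}|,|b_i^{0}|\} = \|\mathbf{a}^{0}\|_1 + \sum_{i\in\mathcal{T}_0:\,|b_i^{0}|>|a_i^{0}|}(|b_i^{0}|-|a_i^{0}|)$ and bound each residual term by $(1-\alpha_i)|b_i^{0}| \le \gamma f_0$ using \eqref{eq:alpha}, \eqref{eq:gamma} and \eqref{eq:ass_coef}. This gives $\|\mathbf{a}^*\|_1 \le \|\mathbf{a}^{0}\|_1 + \gamma|\mathcal{T}_0|f_0$, and symmetrically $\|\mathbf{b}^*\|_1 \le \|\mathbf{b}^{0}\|_1 + \gamma|\mathcal{T}_0|f_0$. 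Splitting each at the support $\mathcal{T}_0$ (and using that $\mathbf{a}^{0},\mathbf{b}^{0}$ are supported on $\mathcal{T}_0$) produces the cone inequalities $\|\mathbf{h}^a_{\mathcal{T}_0^c}\|_1 \le \|\mathbf{h}^a_{\mathcal{T}_0}\|_1 + \gamma|\mathcal{T}_0|f_0$ and likewise for $\mathbf{h}^b$ -- the JBP analogue of the cone condition underlying the CRT proof, with the extra slack $\gamma|\mathcal{T}_0|f_0$ measuring the cross-modality mismatch.

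From here I would run the standard shelling estimate on $\boldsymbol{\Phi}^I$, and identically on $\boldsymbol{\Phi}^D$, both of which inherit the restricted isometry property of $\mathbf{A}$ since $\mathbf{A}$ is block diagonal. Partition $\mathcal{T}_0^c$ into blocks $T_1,T_2,\dots$ of size $M$ in order of decreasing $|h^a_i|$; then $\|\mathbf{h}^a_{(\mathcal{T}_0\cup T_1)^c}\|_2 \le M^{-1/2}\|\mathbf{h}^a_{\mathcal{T}_0^c}\|_1 \le M^{-1/2}\big(\sqrt{|\mathcal{T}_0|}\,\|\mathbf{h}^a_{\mathcal{T}_0\cup T_1}\|_2 + \gamma|\mathcal{T}_0|f_0\big)$, while the head is handled through $(1-\delta_{M+|\mathcal{T}_0|})\|\mathbf{h}^a_{\mathcal{T}_0\cup T_1}\|_2^2 \le |\langle \boldsymbol{\Phi}^I\mathbf{h}^a_{\mathcal{T}_0\cup T_1},\boldsymbol{\Phi}^I\mathbf{h}^a\rangle| + \sum_{j\ge 2}|\langle \boldsymbol{\Phi}^I\mathbf{h}^a_{\mathcal{T}_0\cup T_1},\boldsymbol{\Phi}^I\mathbf{h}^a_{T_j}\rangle|$, bounding the first inner product by Cauchy--Schwarz together with the data-fidelity estimate and the RIP, and the cross terms by the near-orthogonality constant of the RIP times $\|\mathbf{h}^a_{\mathcal{T}_0\cup T_1}\|_2\sum_{j\ge 2}\|\mathbf{h}^a_{T_j}\|_2$. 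Substituting the tail estimate, dividing by $\|\mathbf{h}^a_{\mathcal{T}_0\cup T_1}\|_2$ and solving the resulting affine inequality gives $\|\mathbf{h}^a_{\mathcal{T}_0\cup T_1}\|_2 \le C f_0$ with $C$ as in \eqref{eq:constC}, valid provided $\sqrt{M(1-\delta_{M+|\mathcal{T}_0|})} > \sqrt{|\mathcal{T}_0|(1+\delta_M)}$ so the denominator is positive. Finally $\|\mathbf{h}^a\|_2^2 = \|\mathbf{h}^a_{\mathcal{T}_0\cup T_1}\|_2^2 + \|\mathbf{h}^a_{(\mathcal{T}_0\cup T_1)^c}\|_2^2 \le C^2 f_0^2 + \tfrac{|\mathcal{T}_0|}{M}(C+\gamma\sqrt{|\mathcal{T}_0|})^2 f_0^2$, and the identical bound for $\mathbf{h}^b$, combined with the concatenation, yields \eqref{eq:bound}.

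I expect the main obstacle to be the middle step: because the JBP objective is not an $\ell_1$ norm, translating optimality of $\mathbf{x}^*$ into the two per-modality cone inequalities -- in particular pinning the slack down to exactly $\gamma|\mathcal{T}_0|f_0$ -- requires the specific interplay of Lemma~\ref{lm:bound_activity}, the bound constraints \eqref{o2:13}--\eqref{o2:14}, and the normalization $U=f_0$. The remainder is bookkeeping: keeping the isometry constants $\delta_M$ and $\delta_{M+|\mathcal{T}_0|}$ separate through the shelling so the algebra collapses onto the precise constant \eqref{eq:constC}, and reconciling the per-modality estimates with the concatenated error in \eqref{eq:bound}.
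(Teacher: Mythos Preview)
Your route differs from the paper's. The paper works with the \emph{concatenated} error $\mathbf{h}=[\mathbf{a}^*-\mathbf{a}^0;\mathbf{b}^*-\mathbf{b}^0]$ and the block matrix $\mathbf{A}$ as a single system: it proves one cone inequality $\|\mathbf{h}_{\mathcal{T}_0^c}\|_1\le\|\mathbf{h}_{\mathcal{T}_0}\|_1+\gamma U|\mathcal{T}_0|$ for the stacked vector (via the pair of estimates $\|[\mathbf{a}^*;\mathbf{b}^*]\|_1\le 2U\|\mathbf{x}^*\|_1$ and $\|[\mathbf{a}^0;\mathbf{b}^0]\|_1\ge 2U\|\mathbf{x}^0\|_1-\gamma U|\mathcal{T}_0|$), and then runs the CRT shelling once, with the single data-fidelity bound $\|\mathbf{A}\mathbf{h}\|_2\le 4\eta f_0$. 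You instead decouple into two single-modality problems, derive a separate cone inequality for each of $\mathbf{h}^a$ and $\mathbf{h}^b$, and apply the RIP argument twice. Your per-modality cone inequality $\|\mathbf{h}^a_{\mathcal{T}_0^c}\|_1\le\|\mathbf{h}^a_{\mathcal{T}_0}\|_1+\gamma|\mathcal{T}_0|f_0$ is correct and is a clean alternative to the paper's stacked lemma; it also sidesteps the notational ambiguity of what $\mathcal{T}_0$ means for a length-$2N$ vector.

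However, your final accounting does not land on the stated constants. With the per-modality fidelity $\|\boldsymbol{\Phi}^I\mathbf{h}^a\|_2\le 2\eta f_0$, solving the affine inequality gives $\|\mathbf{h}^a_{\mathcal{T}_0\cup T_1}\|_2\le C'f_0$ with
\[
C'=\frac{2\eta\sqrt{M}+\gamma|\mathcal{T}_0|\sqrt{1+\delta_M}}{\sqrt{M(1-\delta_{M+|\mathcal{T}_0|})}-\sqrt{|\mathcal{T}_0|(1+\delta_M)}},
\]
not the $C$ of \eqref{eq:constC} (the $4\eta$ there comes precisely from the paper's concatenated bound $\|\mathbf{A}\mathbf{h}\|_2\le 4\eta f_0$). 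Then adding the identical estimate for $\mathbf{h}^b$ yields
\[
\|\mathbf{h}^a\|_2^2+\|\mathbf{h}^b\|_2^2\le 2\Bigl[C'^2+\tfrac{|\mathcal{T}_0|}{M}(C'+\gamma\sqrt{|\mathcal{T}_0|})^2\Bigr]f_0^2,
\]
which is not \eqref{eq:bound}: when $\eta$ dominates ($C\approx 2C'$) your bound is actually tighter by roughly a factor of two, but when $\gamma$ dominates ($C\approx C'$) it is looser by a factor of two and hence does not imply \eqref{eq:bound}. So your argument produces a valid and comparable inequality, but to recover the theorem exactly as stated --- with \eqref{eq:constC} and no extra factor --- you need the paper's concatenated treatment rather than summing two per-modality estimates.
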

~\\
The proof of this Theorem is given in Appendix~\ref{app:proof}. 



\section{Intensity-depth dictionary learning}\label{sec:learning}
In the previous section we have shown how to find sparse coefficients in the joint depth-intensity generative model, assuming that the model parameters, i.e., dictionaries $\boldsymbol{\Phi}^I$ and $\boldsymbol{\Phi}^D$ are given. Since we do not have those parameters in general, we propose to learn them from a large database of intensity-depth image examples. Dictionary learning for sparse approximation has been a topic of intensive research in the last couple of years. Almost all existing algorithms are based on Expectation-Maximization, i.e., they are iterative algorithms that consist of two steps: 1) inference of sparse coefficients for a large set of signal examples while keeping the dictionary parameters fixed, and 2) dictionary optimization to minimize the reconstruction error while keeping the coefficients fixed. We follow the same approach here, using JBP in the first step and then conjugate gradient in the second step. Once JBP finds the sparse coefficients $(\mathbf{a}^*, \mathbf{b}^*)$ and the coupling variables $\mathbf{x}$, optimization of $\boldsymbol{\Phi}^I$ and $\boldsymbol{\Phi}^D$ becomes decoupled. Therefore, in the learning step we independently optimize the following objectives:
\begin{align}
\label{eq:learning}
    (\boldsymbol{\Phi}^I)^* &= \min_{\boldsymbol{\Phi}^I}  \| \mathbf{Y}^I  - \boldsymbol{\Phi}^I \mathbf{A}  \|^2_F +\rho \| \boldsymbol{\Phi}^I \|_F\\
    (\boldsymbol{\Phi}^D)^* &= \min_{\boldsymbol{\Phi}^D} \| \mathbf{Y}^D  - \boldsymbol{\Phi}^D \mathbf{B}  \|^2_F +\rho \| \boldsymbol{\Phi}^D\|_F,
\end{align}
where $\|\cdot \|_F$ denotes the Frobenius norm, $\mathbf{Y}^I$, $\mathbf{Y}^D$, $\mathbf{A}$ and $\mathbf{B}$ are matrices whose columns are $\mathbf{y}^I_j$, $\mathbf{y}^D_j$, $\mathbf{a}_j$ and $\mathbf{b}_j$ respectively, and $j = 1,..., J$ indexes the signal examples from a given database. In addition to the reconstruction error, we have added a normalization constraint on the dictionaries, scaled by a small parameter $\rho$, in order to control the dictionary norms as usually done in dictionary learning. Before showing the performance of the proposed learning algorithm, we review prior art that we will use for experimental comparisons in Section~\ref{sec:results}.

\section{Relation to prior art}\label{sec:sota}

To the best of our knowledge, there has not been any work that addresses the problem of learning joint intensity-depth sparse representations. Therefore, we overview prior work that focuses on sparse approximation algorithms that bear similarities to the proposed JBP algorithm. Since the main characteristic of JBP is to find sparse approximations of two signals sharing a common sparse support, we overview algorithms targeting this problem. Such algorithms can be grouped into two categories with respect to the signal model they address: a) simultaneous sparse approximation algorithms, and b) group sparse approximation algorithms. We further discuss how algorithms from each group relate to JBP.


\textit{Simultaneous sparse approximation algorithms} recover a set of jointly sparse signals modeled as~\footnote{For the case of two signals, for example image intensity and depth, this model is a noisy version of the second model discussed in Sec.~\ref{sec:badmodels}.}:
  \begin{equation}
\label{eq:joint_sparse_support}
\mathbf{y}^s = \boldsymbol{\Phi} \mathbf{x}^s + \boldsymbol{\epsilon}^s= \sum_{i\in \mathcal{I}}\boldsymbol{\phi}_i x^s_i + \boldsymbol{\epsilon}^s, \hspace{0.5cm} s=1,...,S,
\end{equation}
where $S$ is the total number of signals $\mathbf{y}^s$, $\boldsymbol{\Phi}$ is the dictionary matrix and $\boldsymbol{\epsilon}^s$ is a noise vector for signal $\mathbf{y}^s$. Vectors of sparse coefficients $\mathbf{x}^s$ share the same sparsity support set $\mathcal{I}$, i.e., they have non-zero entries at the same positions. One of the earliest algorithms in this group is the Simultaneous Variable Selection (SVS) algorithm introduced by Turlach et. al.~\cite{tu:ve:wr:05}. SVS selects a common subset of atoms for a set of signals by minimizing the representation error while constraining the $\ell_{1}$-norm of the maximum absolute values of coefficients across signals. Formally, SVS solves the following problem:
\begin{align}
\text{(SVS)}:  \hspace{0.4 cm} &\min \frac{1}{2} \sum_{s=1}^S \| \mathbf{y}^s - \mathbf{\Phi}\mathbf{x}^s \|^2 , \hspace{0.4 cm} \\
\text{subject to:} \hspace{0.4 cm} 
&\sum_{i}{ \max \{ |x^{1}_i |,...,|x^S_i| \} } \leq \tau,
\end{align}
where $\tau$ is given. Let $\mathbf{X}$ denote the matrix with $\mathbf{x}^s$, $s=1,...,S$ as columns. We can see that the left hand side of the constraint in SVS is obtained by applying the $\ell_{\infty}$-norm to rows (to find the largest coefficients for all explanatory variables), followed by applying the $\ell_{1}$-norm to the obtained vector in order to promote sparsity of the support. We denote this norm as $\|\mathbf{X}\|_{\infty,1}$. Versions of the same problem for the unconstrained case and the error-constrained case have been studied by Tropp~\cite{Tropp2006589}.


To see the relation of SVS to JBP, we use Lemma~\ref{lm:bound_activity}, which allows us to formulate the JBP for the special case of $U^I=U^D$ as:
\begin{align}
\hspace{0.4 cm} &\min :  \hspace{0.4 cm} t  \hspace{0.4 cm} \\
\text{subject to:} \hspace{0.4 cm} 
& \| \mathbf{y}^D - \boldsymbol{\Phi}^D\mathbf{a} \|^2 \leq \epsilon^{2}\\
& \| \mathbf{y}^I - \boldsymbol{\Phi}^I\mathbf{b} \|^2 \leq \epsilon^{2}\\
& \sum_{i}{ \max \{ |a_{i}|, |b_{i}| \} }  \leq t . \label{eq:34lastcont}
\end{align}
Therefore, JBP operates on the same $\ell_{\infty,1}$-norm of the coefficient matrix as SVS. However, in contrast to SVS, JBP minimizes the number of non-zero elements in both $\mathbf{a}$ and $\mathbf{b}$ by minimizing $\| [\mathbf{a}\hspace{0.2cm} \mathbf{b}] \|_{\infty,1}$ and constraining the approximation error induced by the coefficients. A much more important difference of our work and~\cite{tu:ve:wr:05} is that we allow for different sets of atoms for intensity and depth. Thus, in JBP, each signal can be represented using a different dictionary, but with coefficient vectors that share the same positions of non-zero entries. This makes JBP applicable to intensity-depth learning, in contrast to SVS. Finally, we remark here that choosing the objective function as we did allows for a smooth convex representation of the last constraint (\ref{eq:34lastcont}).\\

\textit{Group sparse approximation algorithms} recover a signal modeled as:
    \begin{equation}
\label{eq:group_sparse_support}
\mathbf{y} = \sum_{i} \mathbf{H}_i \mathbf{x}_i + \boldsymbol{\epsilon}, 
\end{equation}
where $\mathbf{H}_i$ is a submatrix of a big dictionary matrix $\mathbf{H}$. This model is useful for signals whose sparse support has a group structure, namely when groups of coefficients are either all non-zero or all zero. 
The first algorithm proposed for group sparse approximation was a generalization of Lasso, developed by Bakin~\cite{Bakin99}, and later studied by other authors (e.g. Yuan and Lin~\cite{Yuan06}). Group Lasso refers to the following optimization problem:
\begin{align}
& \text{(GL)}:
 \min \; & \| \mathbf{y} - \sum_{i}  \mathbf{H}_i \mathbf{x}_i \|^2 + \lambda \sum_{i}\|\mathbf{x}_i \|_p,
\end{align}
where $\|\cdot \|_p$ denotes the $\ell_{p}$-norm. The most studied variant of group lasso is for $p=2$, because it leads to a convex optimization problem with efficient implementations. The group sparsity model can be used to represent intensity-depth signals by considering pairs $(a_i,b_i), i=1,...,N$ as groups. In this case, group lasso with $p=2$ becomes:
\begin{align}
\text{(GL-ID)}: 
 & \min ( \| \mathbf{y}^I - \sum_{i}  \boldsymbol{\phi}^I_i a_i \|^2 +\\
 &  \| \mathbf{y}^D - \sum_{i}  \boldsymbol{\phi}^D_i b_i \|^2 +  \lambda \sum_{i} \sqrt{a_{i}^2+b_{i}^2}).  \nonumber 
\end{align}
The drawback of GL with $p=2$ is that the square norm gives higher weight to balanced atom pairs (pairs with similar coefficients) than to asymmetric pairs with one large and one small coefficient. This means that GL would give priority to atom pairs with similar coefficients, which do not necessarily correspond to meaningful intensity depth pairs (see examples in Section~\ref{sec:badmodels}, where 3D features yield pairs with possibly large differences in coefficient values). Choosing $p=\infty$ avoids this problem and allows selection of pairs with unbalanced coefficients. In that case the regularizer penalizes the norm $\| [\mathbf{a}\hspace{0.2cm} \mathbf{b}] \|_{\infty,1}$. Rather than solving the unconstrained problem of group lasso with $p=\infty$ and a non-smooth objective, JBP reaches a similar goal by solving a constrained convex optimization problem with smooth constraints. It also eliminates the need for tuning the Lagrange multiplier.


 \section{Experimental results}\label{sec:results}
 
We have performed two sets of experiments in order to evaluate the proposed JBP and dictionary learning based on JBP. The first set of experiments uses simulated random data, with the goal to determine the model recovery performance of JBP when the ground truth signal models are given. In the second set, we apply JBP and dictionary learning on real depth-intensity data and show its performance on a depth inpainting task. In both cases, JBP has been compared to Group Lasso (GL). For the depth inpainting task, we also compare JBP to inpainting using total variation (TV)~\cite{Chambolle04}.

\subsection{Model recovery}\label{subsec:rec}
To evaluate the performance of JBP, we have generated a set of pairs of signals of size $N=64$, denoted by $\{\mathbf{y}^I_j\}$ and $\{\mathbf{y}^D_j\}$, $j=1,500$. Signals in each pair have a common sparsity support of size $|\mathcal{T}_0|$, and they are sparse in random, Gaussian iid dictionaries $\mathbf{\Phi}^I$ and $\mathbf{\Phi}^D$ of size $64\times128$. Their coefficients, $\{\mathbf{a}_j\}$ and $\{\mathbf{b}_j\}$, $j=1,500$ are random, uniformly distributed, and do not have the same values nor signs. However, their ratios $\alpha_i$ (as defined in Eq.~\ref{eq:alpha}) are bounded from below, which gives a certain value of $\gamma$ (see Eq.~\ref{eq:gamma}). Hence, we assume some similarity in the magnitudes within each pair of coefficients of the two modalities. All signals have been corrupted by Gaussian noise.

Figure~\ref{fig:JBPrecovery_coefs} shows the relative coefficient reconstruction error $\|\mathbf{a}^*-\mathbf{a}\|_2^2/ \|\mathbf{a}\|_2^2+ \|\mathbf{b}^*-\mathbf{b}\|_2^2/ \|\mathbf{b}\|_2^2$, where $(\mathbf{a}^*,\mathbf{b}^*)$ are the reconstructions of original values $(\mathbf{a},\mathbf{b})$. The error is averaged over 50 different signals and plotted versus the signal-to-noise (SNR) ratio between sparse signals and Gaussian noise. The parameter values for this evaluation set have been chosen as: $|\mathcal{T}_0| = 10$ and $\gamma=0.25$, which represent reasonable values that we would expect in real data. We have compared JBP with GL and with the theoretical bound in Eq.~\ref{eq:bound}, for $M=25$ and $M=64$. Instead of using the dictionary coherence value for $\delta$, which would give the worst-case bounds, we use the mean of inner products between all atoms to obtain and plot the average case bounds. We can see that JBP outperforms GL for a large margin. Moreover, the actual performance of JBP is much better than predicted by the theory, showing that the average derived bound is rather conservative.
\begin{figure}[!htbp]
\begin{center}
\includegraphics[width=0.49\textwidth]{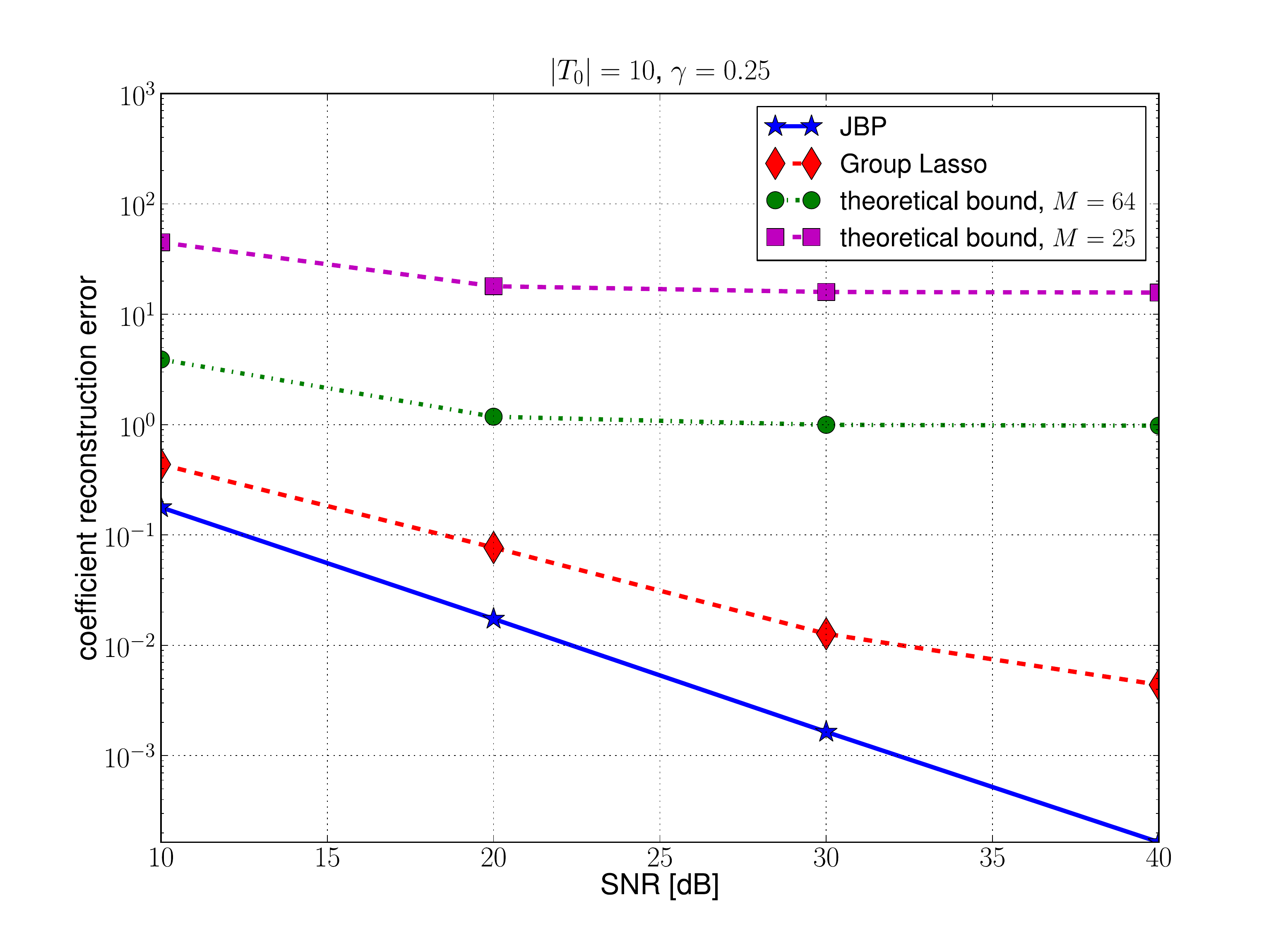}
\end{center}
\caption{JBP model recovery performance for random signals. Average relative coefficient reconstruction error is plotted for different signal-to-noise (SNR) ratios between sparse signals and Gaussian noise.}
\label{fig:JBPrecovery_coefs}
\end{figure}



Furthermore, we have used these randomly generated signals as training sets in our dictionary learning algorithm, in order to recover the original dictionary. For four different values of sparsity $|\mathcal{T}_0|=2,4,6,8$, we have applied the proposed learning algorithm starting from a random initial dictionary. For comparison, we have replaced the JBP in the inference step with GL, while keeping the learning step exactly the same. We refer to this method as GL-based learning.  Figure~\ref{fig:dico_recovery}a shows the mean square error (MSE) between the original atoms and the recovered ones vs sparsity $|\mathcal{T}_0|$, for JBP and GL-based learning. Similarly, we plot in Figure~\ref{fig:dico_recovery}b the percentage of recovered atoms vs sparsity, where an atom is considered recovered when its MSE is less than 0.05. Below this threshold the comparison is impossible since GL recovery error is huge (almost 0 recovered atoms). We can see from both graphs that learning based on JBP is superior to GL-based learning.

\begin{figure}[!htbp]
\begin{center}
\includegraphics[width=0.4\textwidth]{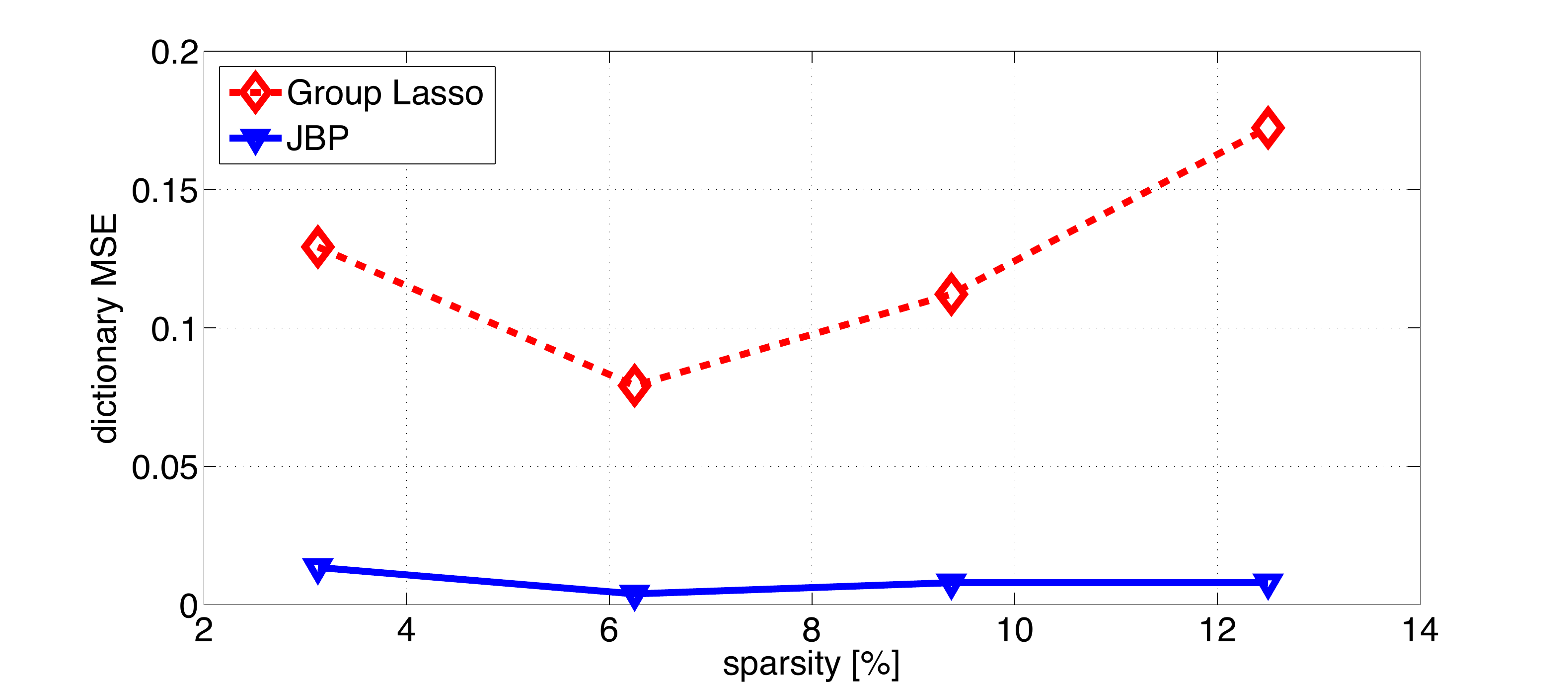}\\
\mbox{ \footnotesize{(a)}}\\
\includegraphics[width=0.4\textwidth]{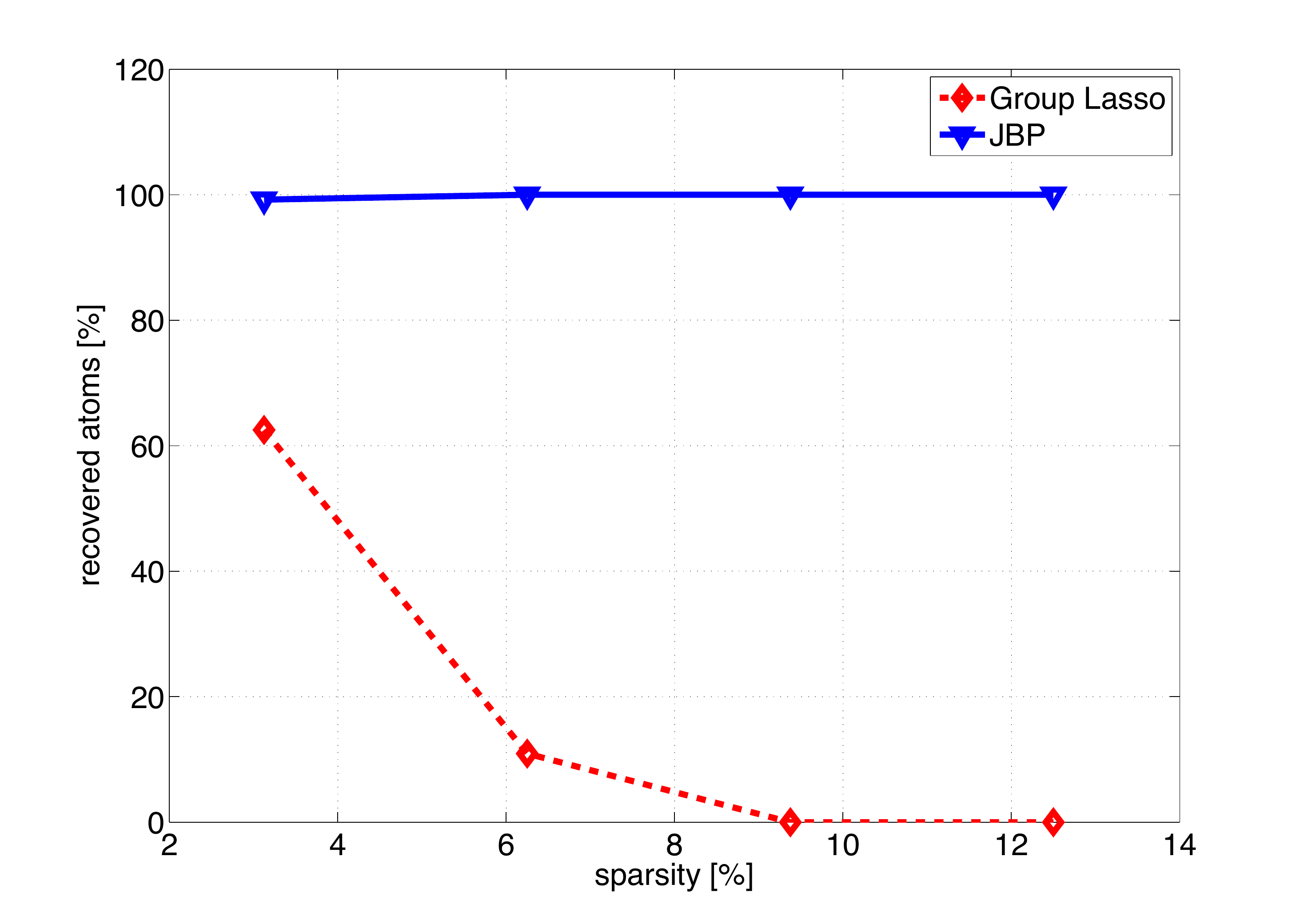}\\
\mbox{ \footnotesize{(b)}}\\
\end{center}
\caption{Recovery performance of dictionary learning using JBP and comparison to GL. (a) Mean square error between recovered atoms and original atoms vs sparsity $|\mathcal{T}_0|$. (b) Percentage of recovered atoms vs sparsity $|\mathcal{T}_0|$.}
\label{fig:dico_recovery}
\end{figure}


\subsection{Intensity-depth dictionary learning}\label{subsec:id}

 In our second set of experiments we have evaluated the performance of JBP and dictionary learning on real data, in particular on depth-intensity images. We have learned a depth-intensity overcomplete dictionary on the Middlebury 2006 benchmark depth-intensity data~\cite{middlebury}. The intensity data has been whitened, i.e., its frequency spectrum has been flattened, as initially proposed in~\cite{olshausen97sparse}. Such pre-processing speeds up the learning. Depth data could not be whitened because it would introduce Gibbs artifacts around the missing regions at occlusions. We handle such missing pixels by masking. Learning has been performed in a patch-mode. Namely, in each iteration of the two-step learning process, a large number of depth-intensity pairs of $12\times12$ size patches have been randomly selected from data. Each depth and intensity patch within a pair coincide in a 3D scene. Patches have been normalized to have norm one, and $\eta$ has been set to $0.1$. We have chosen this value such that we get a good reconstruction of depth, without the quantization effects present in Middlebury depth maps (i.e., such that the quantization error is subsumed by the reconstruction error). We have learned dictionaries $\mathbf{\Phi}^I$ and $\mathbf{\Phi}^D$, each of size $144\times288$, i.e., twice overcomplete. For comparison, we have also learned depth-intensity dictionaries using GL-based learning, where $\lambda=0.3$ has been chosen to obtain the same average reconstruction error as in JBP.
 
Figures~\ref{fig:dicos}a and Figures~\ref{fig:dicos}b show dictionaries learned by JBP and GL, respectively. The JBP-learned dictionary contains more meaningful features, such as coinciding depth-intensity edges, while GL-learned dictionary only has few of those. JBP dictionary atoms also exhibit correlation between orientations of the Gabor-like intensity atoms and the gradient angle of depth atoms. This is quite visible in the scatter plots of intensity orientation vs depth gradient angle shown in Figure~\ref{fig:scatter}. We can see that for JBP there is significant clustering around the diagonal (corresponding to a $90^{\circ}$ angle between orientation and gradient). On the other hand, we cannot see this effect when using GL for learning. To the best of our knowledge, this is the first time that the correlation between depth gradient angles and texture orientations is found to emerge from natural scenes data (see~\cite{PotetzLee2010} for some recent research in the area of 3D scene statistics).

Finally, we have compared the performance of JBP and GL, and the corresponding learned dictionaries, on an inpainting task. Namely, we have randomly removed 96\% of depth pixels from an intensity-depth pair obtained by a time-of-flight (TOF) camera\footnote{\url{http://www.pmdtec.com/}}. We have chosen the TOF data to show that learned dictionaries of intensity-depth are not linked to particular depth sensors. Original intensity and depth images are shown in Figures~\ref{fig:inpainting}a) and b), respectively. From the original intensity image and 4\% of depth pixels (shown in Figure~\ref{fig:inpainting}c), we have reconstructed the whole depth image, using GL with the GL-learned dictionary (Figure~\ref{fig:inpainting}d), and using JBP with the JBP-learned dictionary (Figure~\ref{fig:inpainting}e). We have also applied TV inpainting on depth masked image only and obtained the result shown in Figure~\ref{fig:inpainting}f. We can see that JBP gives the best performance (mean square error MSE=4.9e-3), followed by GL (MSE=7.2e-3) and TV (MSE=7.7e-3). Therefore, GL gives just a minor improvement to TV inpainting (which does not use the intensity image), while JBP gives a significantly smaller MSE compared to both GL and TV.

 
\begin{figure*}[!htbp]
\begin{center}
\includegraphics[width=0.9\textwidth]{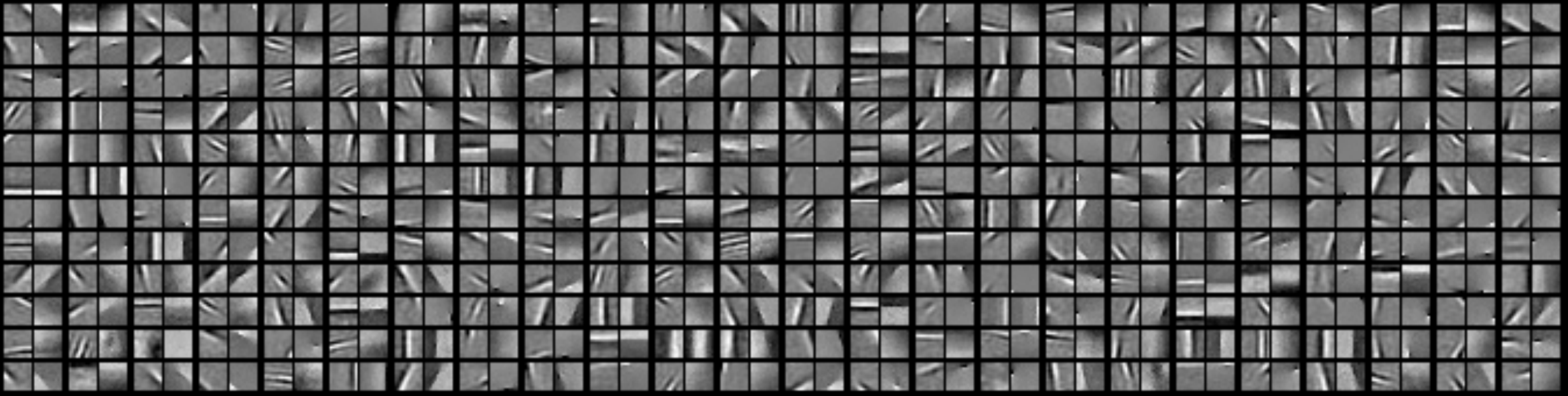}\\
\mbox{ (a) JBP}\\
~\\
\includegraphics[width=0.9\textwidth]{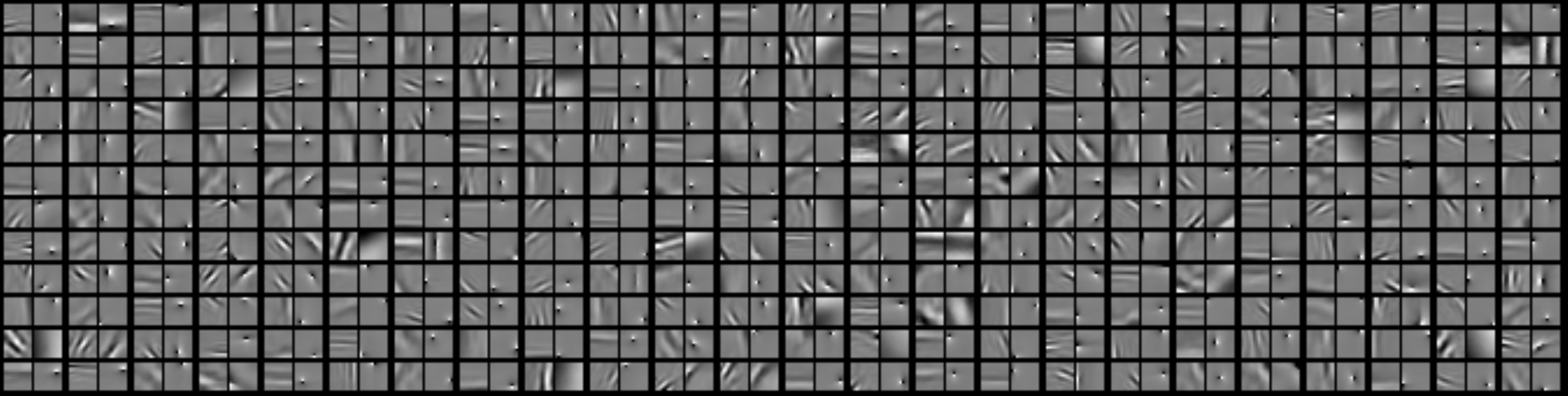}\\
\mbox{ (b) GL}\\
\end{center}
\caption{Learned intensity-depth dictionaries. Each column contains a set of atom pairs ($\boldsymbol{\phi}^I_1,\boldsymbol{\phi}^D$), where the left part is an intensity atom and the right part is a depth atom. (a) JBP-learned dictionaries, (b) GL-learned dictionaries.}
\label{fig:dicos}
\end{figure*}

\begin{figure*}[!htbp]
\begin{center}
$\begin{array}{c@{\hspace{1 cm}}c@{\hspace{1 cm}}c}
\includegraphics[width=0.15\textwidth]{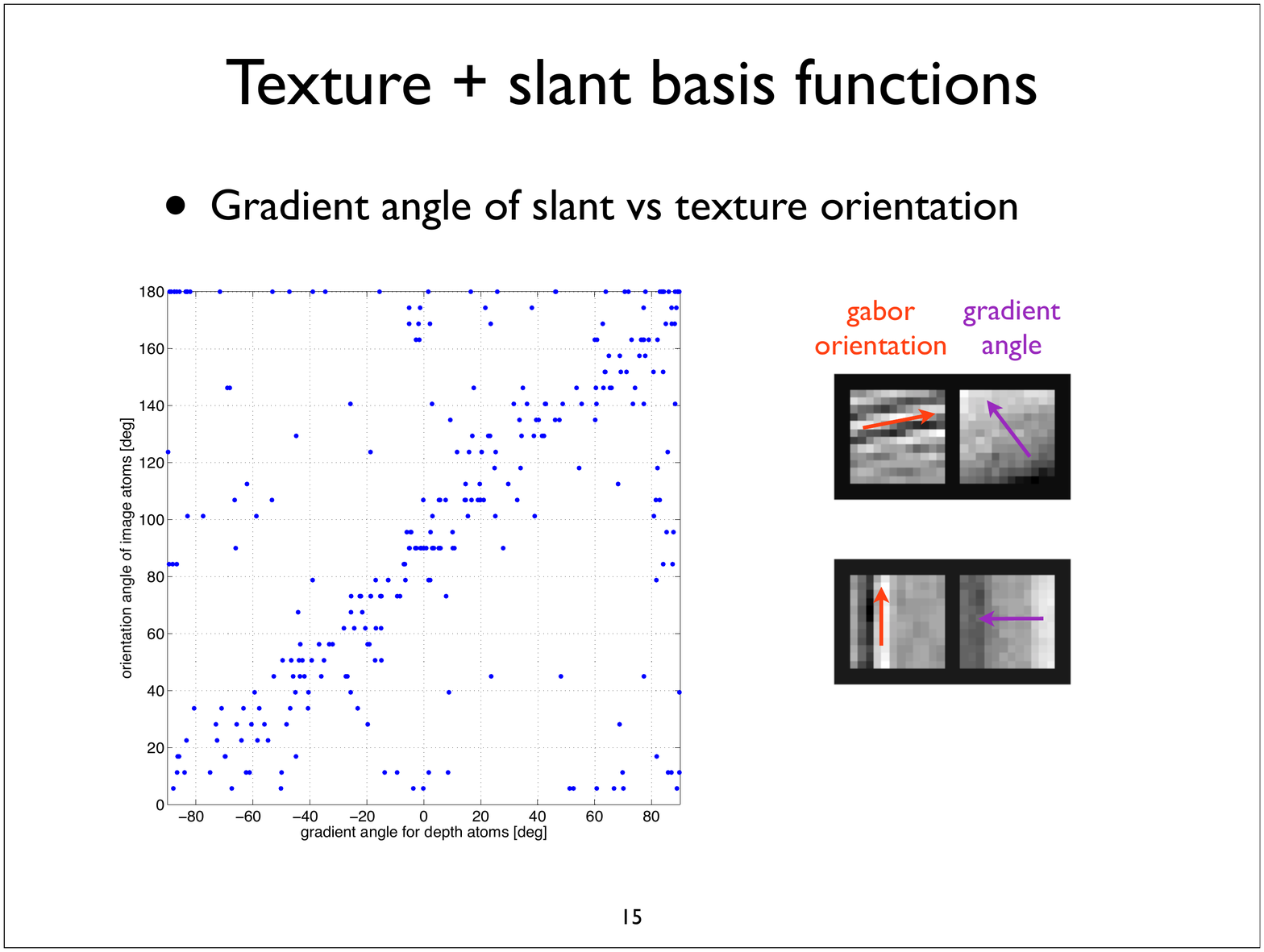}&
\includegraphics[width=0.3\textwidth]{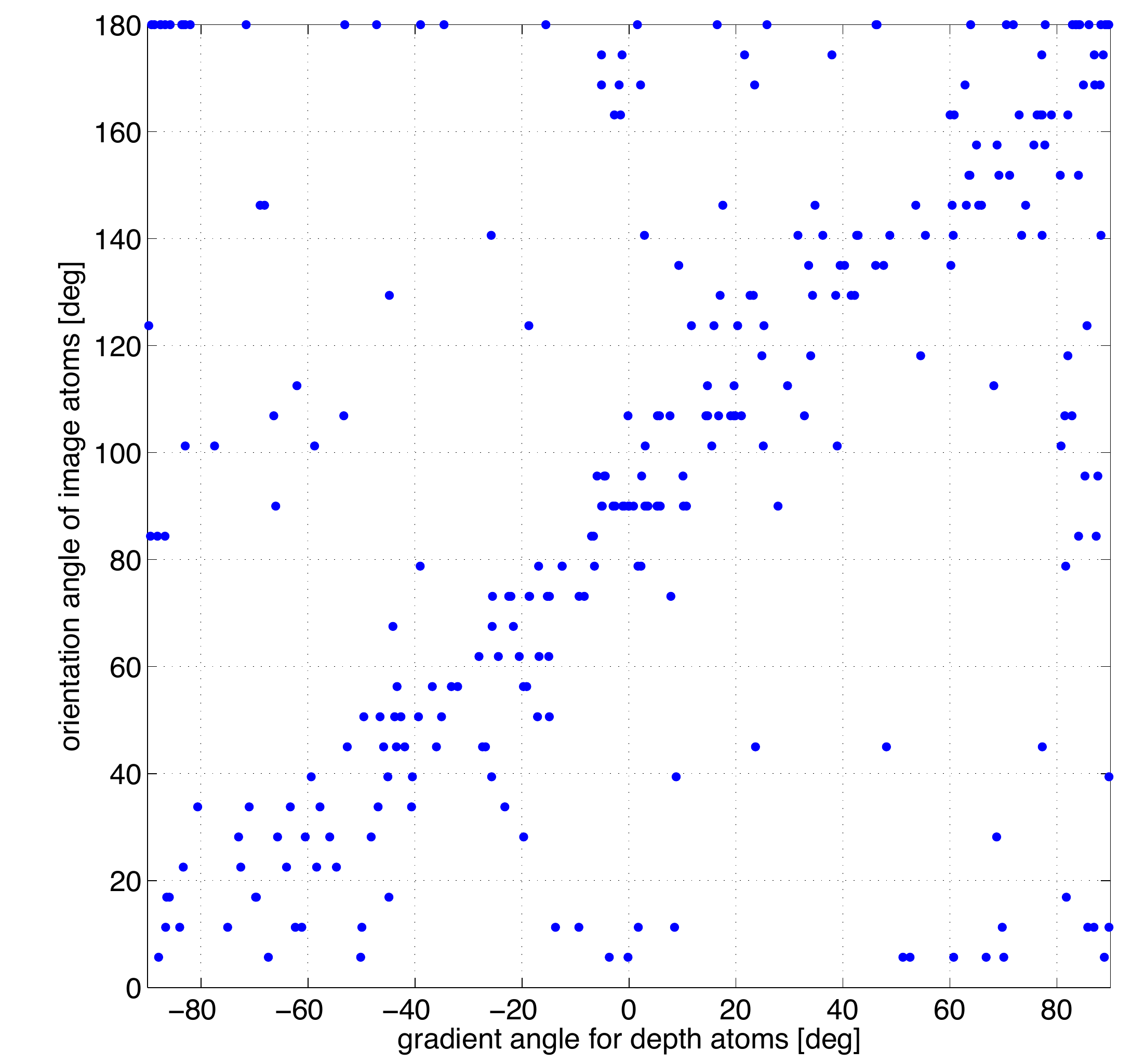}&
\includegraphics[width=0.3\textwidth]{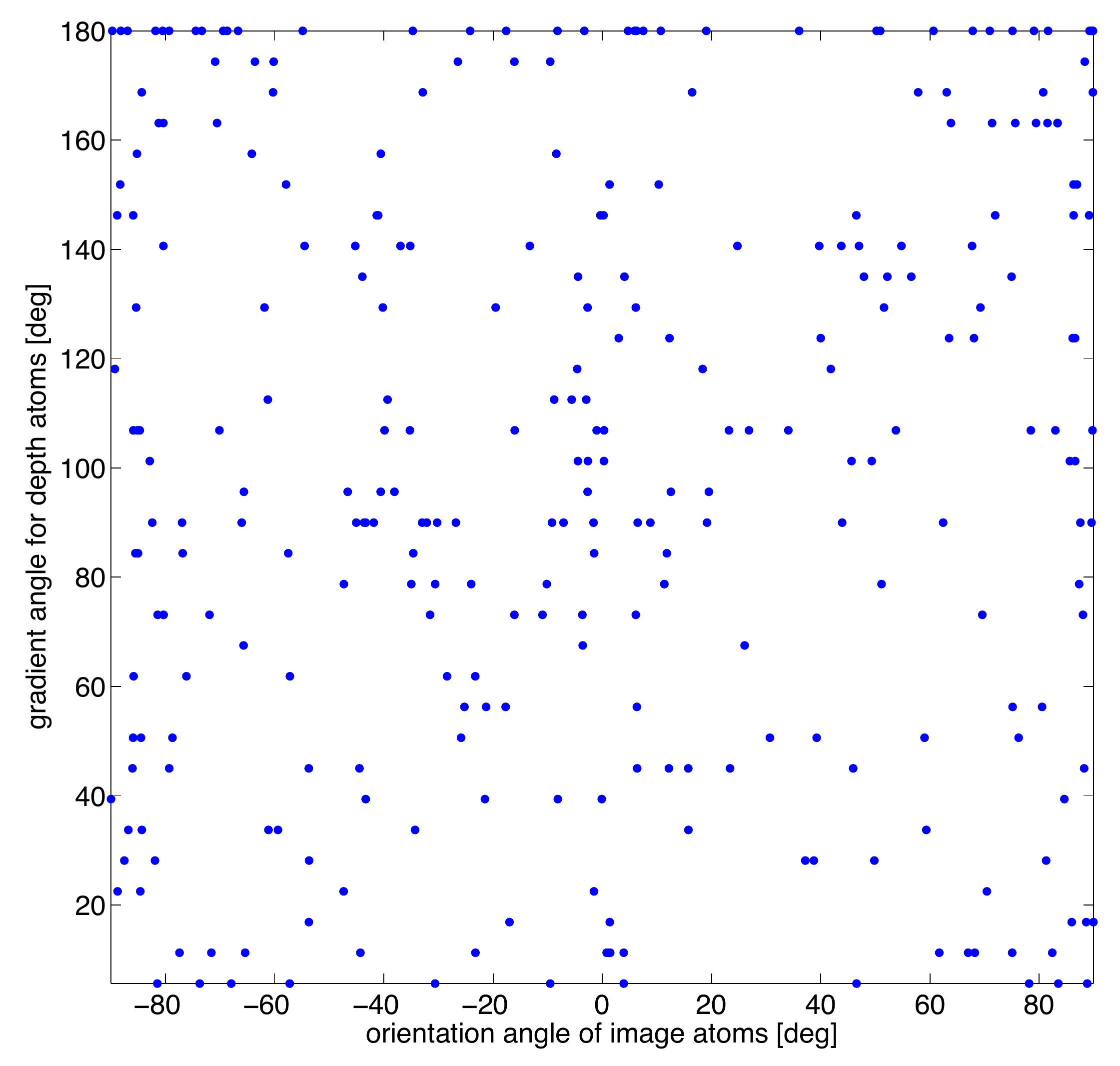}\\
\mbox{ \footnotesize{(a)} } &\mbox{ \footnotesize{(b)} } &\mbox{ \footnotesize{(c)} } \\
\end{array}$
\end{center}
\caption{Correlation between depth atom gradients and image atom orientations. a) Illustration of atom pairs that have 90 degrees angle between the orientation of the Gabor-like intensity part and the gradient angle of the depth part. Scatter plots of orientation vs gradient angle for b) JBP and c) GL.}

\label{fig:scatter}
\end{figure*}

   \begin{figure*}[!htbp]
\begin{center}
$\begin{array}{c@{\hspace{0.05 cm}}c@{\hspace{0.05 cm}}c@{\hspace{0.05 cm}}c@{\hspace{0.05 cm}}c@{\hspace{0.05 cm}}c}
\includegraphics[width=0.16\textwidth]{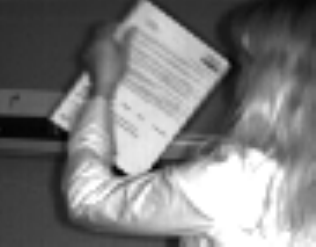}&
\includegraphics[width=0.16\textwidth]{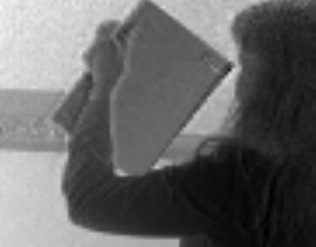}&
\includegraphics[width=0.16\textwidth]{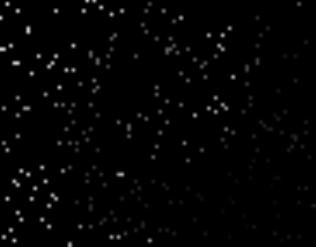}&
\includegraphics[width=0.16\textwidth]{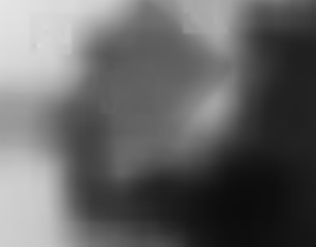}&
\includegraphics[width=0.16\textwidth]{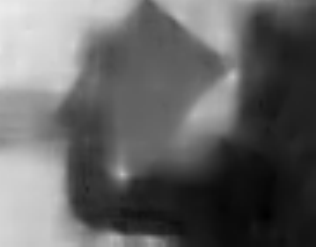}&
\includegraphics[width=0.16\textwidth]{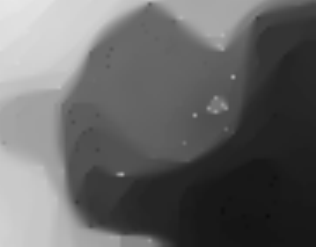}\\
\mbox{ \footnotesize{(a)} } &\mbox{ \footnotesize{(b)} } &\mbox{ \footnotesize{(c)} } &\mbox{ \footnotesize{(d) GL} }&\mbox{ \footnotesize{(e) JBP}  }&\mbox{ (f) \footnotesize{TV} }\\
\end{array}$
\end{center}
\caption{Inpainting results on time of flight data. a) Original intensity image, b) Original depth image, c) 4\% of kept depth pixels, d) reconstructed depth with GL; MSE = 7.2e-3, e) reconstructed depth with JBP, MSE = 4.9e-3, f) reconstructed depth with total variation inpainting, MSE=7.7e-3.}
\label{fig:inpainting}
\end{figure*}

\section{Conclusion}
We have presented an algorithm for learning joint overcomplete dictionaries of image intensity and depth. The proposed method is based on a novel second order cone program (called JBP) for recovering sparse signals of joint sparse support in dictionaries with two modalities. We have derived a theoretical bound for the coefficient recovery error of JBP and shown its superiority to the Group Lasso algorithm through numerical simulations. When applied to the Middlebury image-depth database, the proposed learning algorithm converges to a dictionary of various intensity-depth features, such as coinciding edges and image grating - depth slant pairs. The learned features exhibit a significant correlation of depth gradient angles and texture orientations, which is an important result in 3D scene statistics research. Finally, we have shown that JBP with the learned dictionary can reconstruct meaningful depth maps from only 4\% of depth pixels. These results outline the important value of our method for 3D technologies based on hybrid image-depth sensors.

\begin{appendix}\label{app:proof}
\subsection{Proof of Theorem~\ref{thm:bound}}

Let us first prove the following lemma:
\begin{lemma}
For $\mathbf{h}:= [\mathbf{a}^{*}; \mathbf{b}^{*}] - [\mathbf{a}^{0}; \mathbf{b}^{0} ] $ it holds true that:
\begin{equation}
\label{eq:cone_constraint}
\|\mathbf{h}_{\mathcal{T}_0^C} \|_1 \leq \|\mathbf{h}_{\mathcal{T}_0}\|_1 + \gamma U |\mathcal{T}_0|,
\end{equation}
where $\mathcal{T}_0^C$ denotes the complement set of $\mathcal{T}_0$ and $\mathbf{h}_{\mathcal{T}}$ denotes the subvector of $\mathbf{h}$ corresponding to $\mathcal{T}$.
\end{lemma}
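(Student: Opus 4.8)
The plan is to mimic the classical $\ell_1$-recovery cone argument of \cite{ca:ro:te:05}, with the single twist that the JBP objective $\sum_i x_i$ is not literally the $\ell_1$-norm of $[\mathbf{a};\mathbf{b}]$, and the gap between the two is exactly what produces the extra additive term $\gamma U|\mathcal{T}_0|$. Write $\mathbf{h}^a=\mathbf{a}^*-\mathbf{a}^0$, $\mathbf{h}^b=\mathbf{b}^*-\mathbf{b}^0$, so $\mathbf{h}=[\mathbf{h}^a;\mathbf{h}^b]$, and read $\mathcal{T}_0$ as a subset of indices in \emph{each} of the intensity and depth blocks, so that $[\mathbf{a}^0;\mathbf{b}^0]$ is supported on $\mathcal{T}_0$. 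The reverse triangle inequality then gives the usual split
\[
\|[\mathbf{a}^*;\mathbf{b}^*]\|_1 = \|([\mathbf{a}^0;\mathbf{b}^0]+\mathbf{h})_{\mathcal{T}_0}\|_1 + \|\mathbf{h}_{\mathcal{T}_0^C}\|_1 \ge \|[\mathbf{a}^0;\mathbf{b}^0]\|_1 - \|\mathbf{h}_{\mathcal{T}_0}\|_1 + \|\mathbf{h}_{\mathcal{T}_0^C}\|_1 ,
\]
so that the lemma reduces to the single estimate $\|[\mathbf{a}^*;\mathbf{b}^*]\|_1 - \|[\mathbf{a}^0;\mathbf{b}^0]\|_1 \le \gamma U|\mathcal{T}_0|$.

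To obtain this estimate I would route the comparison through the coupling variables. Since $(\mathbf{x}^0,\mathbf{a}^0,\mathbf{b}^0)$ is feasible for (OPT2) and $(\mathbf{x}^*,\mathbf{a}^*,\mathbf{b}^*)$ is optimal, we have $\sum_i x_i^* \le \sum_i x_i^0$. On the optimal side, the feasibility constraints \eqref{o2:13}--\eqref{o2:14} with $U^I=U^D=U$ give $|a_i^*|+|b_i^*| \le 2U x_i^*$ for every $i$, hence $\|[\mathbf{a}^*;\mathbf{b}^*]\|_1 \le 2U\sum_i x_i^*$. On the feasible side, for $i\in\mathcal{T}_0$ the construction of $\mathbf{x}^0$ gives $U x_i^0 = \max\{|a_i^0|,|b_i^0|\}$, while the definition \eqref{eq:alpha} of $\alpha_i$ gives $\min\{|a_i^0|,|b_i^0|\} = \alpha_i \max\{|a_i^0|,|b_i^0|\}$, so $|a_i^0|+|b_i^0| = (1+\alpha_i)U x_i^0$; for $i\notin\mathcal{T}_0$ both sides and $x_i^0$ vanish. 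Combining these with optimality and $\sum_i x_i^0 = \sum_{i\in\mathcal{T}_0}x_i^0$ yields
\[
\|[\mathbf{a}^*;\mathbf{b}^*]\|_1 - \|[\mathbf{a}^0;\mathbf{b}^0]\|_1 \le 2U\sum_{i\in\mathcal{T}_0}x_i^0 - \sum_{i\in\mathcal{T}_0}(1+\alpha_i)U x_i^0 = U\sum_{i\in\mathcal{T}_0}(1-\alpha_i)x_i^0 .
\]
Finally, $1-\alpha_i \le \gamma$ by \eqref{eq:gamma}, and $x_i^0 = \max\{|a_i^0|,|b_i^0|\}/f_0 \le 1$ by the normalization in \eqref{eq:ass_coef} (recall $U=f_0$), so the right-hand side is at most $\gamma U|\mathcal{T}_0|$, which closes the reduction.

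I expect the only real difficulty to be bookkeeping rather than any deep estimate: being careful that $\mathcal{T}_0$ is treated as an index set in both blocks of the $2N$-vector so the triangle split is legitimate, keeping the bounds $|a_i^*|+|b_i^*|\le 2Ux_i^*$ and $|a_i^0|+|b_i^0|=(1+\alpha_i)Ux_i^0$ correctly paired with the optimal and feasible points respectively, and checking the degenerate indices where one of $a_i^0,b_i^0$ vanishes on $\mathcal{T}_0$ (there $\alpha_i=0$, but then $\gamma=1$, so $1-\alpha_i\le\gamma$ still holds). Note that no step here uses the restricted isometry property; the RIP enters only afterward, when this cone inequality is converted into the $\ell_2$ bound of Theorem~\ref{thm:bound}.
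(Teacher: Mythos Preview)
Your proposal is correct and follows essentially the same route as the paper: both bound $\|[\mathbf{a}^*;\mathbf{b}^*]\|_1 \le 2U\|\mathbf{x}^*\|_1 \le 2U\|\mathbf{x}^0\|_1$ via feasibility and optimality, lower-bound $\|[\mathbf{a}^0;\mathbf{b}^0]\|_1$ by $2U\|\mathbf{x}^0\|_1 - \gamma U|\mathcal{T}_0|$ using the definition of $\alpha_i$ and $x_i^0\le 1$, and then apply the triangle inequality on/off $\mathcal{T}_0$. Your use of the exact identity $|a_i^0|+|b_i^0|=(1+\alpha_i)Ux_i^0$ is in fact slightly cleaner than the paper's index-set partition $\mathcal{I}_a^0,\mathcal{I}_b^0$, but the argument is the same.
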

\begin{proof}
Define
\begin{align*}
\mathcal{I}_{a}^{0}:= &  \{i \in \mathcal{I}: |a_{i}^{0}| = U x_i{^{0}} \}, \\
\mathcal{I}_{b}^{0}:= &  \{i \in \mathcal{I} \setminus \mathcal{I}_{a}^{0}: |b_{i}^{0}| = U x_i{^{0}} \}, \\
\mathcal{I}_{a}^{*}:= &  \{i \in \mathcal{I}: |a_{i}^{*}| = U x_i{^{*}} \}, \\
\mathcal{I}_{b}^{*}:= &  \{i \in \mathcal{I}\setminus \mathcal{I}_{a}^{*}: |b_{i}^{*}| = U x_i{^{*}} \}. \\
\end{align*}
Due to Lemma~\ref{lm:bound_activity}, we have that $\mathcal{I}_{a}^{0} \cup \mathcal{I}_{b}^{0} = \mathcal{I}$ and $\mathcal{I}_{a}^{*} \cup \mathcal{I}_{b}^{*} = \mathcal{I}$, and due to the definition above it holds that $\mathcal{I}_{a}^{0} \cap \mathcal{I}_{b}^{0} = \emptyset$ and $\mathcal{I}_{a}^{*} \cap \mathcal{I}_{b}^{*} = \emptyset$. Therefore, we have that:
\begin{align}
\label{eq:cone_right}
\| [\mathbf{a}^*;\mathbf{b}^*]\|_1 &= \sum_{i \in \mathcal{I}_{a}^{*}} |a_i^*| + \sum_{i \in \mathcal{I}_{b}^{*}} |b_i^*|+ \sum_{i \in \mathcal{I}_{a}^{*}} |b_i^*| + \sum_{i \in \mathcal{I}_{b}^{*}} |a_i^*| \nonumber \\
&\leq U \sum_{i \in \mathcal{I}} |x_i^*| + U \sum_{i \in \mathcal{I}_{a}^{*}} |x_i^*| + U \sum_{i \in \mathcal{I}_{b}^{*}}  |x_i^*| \nonumber\\
&= 2U\|\mathbf{x}^*\|_1.
\end{align}
Similarly, we have that:
\begin{align}
\label{eq:cone_left}
\| [\mathbf{a}^0;\mathbf{b}^0]\|_1 &= \sum_{i \in \mathcal{I}_{a}^{0}} |a_i^0| + \sum_{i \in \mathcal{I}_{b}^{0}} |b_i^0|+ \sum_{i \in \mathcal{I}_{a}^{0}} |b_i^0| + \sum_{i \in \mathcal{I}_{b}^{0}} |a_i^0| \nonumber \\
& \geq U \sum_{i \in \mathcal{I}} |x_i^0| + \min_{i \in \mathcal{T}_{0}}\alpha_i (\sum_{i \in \mathcal{I}_{a}^{0}}  |a_i^0| + \sum_{i \in \mathcal{I}_{b}^{0}}  |b_i^0| ) \nonumber  \\
&\geq^{\textnormal{\eqref{eq:gamma}}}  2U\|\mathbf{x}^0\|_1 - \gamma U |\mathcal{T}_0|.
\end{align}
Due to optimality of $\mathbf{x}^{*}$, we have $\| \mathbf{x}^{*} \|_{1} \leq \| \mathbf{x}^{0} \|_{1}$, which combined with (\ref{eq:cone_right}) and (\ref{eq:cone_left}) gives:
\begin{align}\label{eq:norm_relation}
\| [\mathbf{a}^*;\mathbf{b}^*]\|_1 \leq 2U\|\mathbf{x}^0\|_1 \leq \| [\mathbf{a}^0;\mathbf{b}^0]\|_1 + \gamma U |\mathcal{T}_0|.
\end{align}
Due to $\mathbf{a}^{0}_{\mathcal{T}_{0}^{C}} = \mathbf{0}$ and $\mathbf{b}^{0}_{\mathcal{T}_{0}^{C}} = \mathbf{0}$, we can write
\begin{align}
\| [\mathbf{a}^{0}; \mathbf{b}^{0}] + \mathbf{h} \|_{1} &= \|[\mathbf{a}_{\mathcal{T}_{0}}^{0}; \mathbf{b}_{\mathcal{T}_{0}}^{0}; \mathbf{0}] + [\mathbf{h}_{\mathcal{T}_{0}}; \mathbf{h}_{\mathcal{T}^{C}_{0}}] \|_{1} \nonumber\\
&= \| [\mathbf{a}_{\mathcal{T}_{0}}^{0}; \mathbf{b}_{\mathcal{T}_{0}}^{0}] + \mathbf{h}_{\mathcal{T}_{0}}\|_{1} + \| \mathbf{h}_{\mathcal{T}^{C}_{0}} \|_{1}.
\end{align}
Thus, using the triangle inequality and the definition of $\mathbf{h}$ we derive:
\begin{align*}
& \| [\mathbf{a}^{0}; \mathbf{b}^{0}]\|_{1} -  \| \mathbf{h}_{\mathcal{T}_0} \|_{1} + \| \mathbf{h}_{\mathcal{T}_{0}^{C}} \|_{1} \leq  \| [\mathbf{a}^{0}; \mathbf{b}^{0}] + \mathbf{h} \|_{1} \nonumber \\
&= \| [\mathbf{a}^{*}; \mathbf{b}^{*}]\|_{1} \leq^{\textnormal{\eqref{eq:norm_relation}}}  \| [\mathbf{a}^{0}; \mathbf{b}^{0}]\|_{1}  + \gamma U |\mathcal{T}_0|
\end{align*}
and thus
\begin{equation}
\label{eq:coneconstr}
\| \mathbf{h}_{\mathcal{T}_{0}^{C}} \|_{1} \leq \| \mathbf{h}_{\mathcal{T}_{0}} \|_{1} +  \gamma U |\mathcal{T}_0|.
\end{equation}

\end{proof}

We are now ready to prove Theorem~\ref{thm:bound}.
\begin{proof}
Let $\mathbf{A}$ be defined as in Eq.~(\ref{eq:matA}). Then we have from \eqref{o2:11} and \eqref{o2:12}  that
\[
\| \mathbf{A} \mathbf{h} \|_{2} \leq 4 \epsilon = 4\eta f_0.
\]
Assume we have divided $\mathcal{T}^{C}_{0}$ into subsets of size $M$, more precisely, we have
$ \mathcal{T}^{C}_{0} = \mathcal{T}_{1} \cup \dots \cup \mathcal{T}_{n-|\mathcal{T}_{0}|}$, where $\mathcal{T}_{i}$ are sorted by decreasing order of $\mathbf{h}_{\mathcal{T}^{C}_{0}}$, and where $\mathcal{T}_{01} = \mathcal{T}_{0} \cup \mathcal{T}_{1}$. Without alternations - cf.  \cite{ca:ro:te:05} - it holds true that 
\[
\| \mathbf{h}_{\mathcal{T}_{01}^{C}} \|^{2}_{2} \leq \| \mathbf{h}_{\mathcal{T}^{C}_{0}} \|_{1}^2 / M.
 \]
 Using \eqref{eq:coneconstr} yields now
 \begin{align}
\| \mathbf{h}_{\mathcal{T}_{01}^{C}} \|^{2}_{2} &\leq (\| \mathbf{h}_{\mathcal{T}_{0}} \|_{1} +  \gamma U |\mathcal{T}_0|)^{2} / M \nonumber\\
&\leq (\sqrt{|\mathcal{T}_{0}|}\| \mathbf{h}_{\mathcal{T}_{0}} \|_{2} +  \gamma U |\mathcal{T}_0|)^{2} / M,
 \end{align}
 where the second step follows from the norm inequality. Hence:
 \begin{align}\label{eq:ernorm}
 \| \mathbf{h} \|^{2}_{2}  &= \| \mathbf{h}_{\mathcal{T}_{01}} \|^{2}_{2} + \| \mathbf{h}_{\mathcal{T}_{01}^{C}} \|^{2}_{2} \nonumber\\
 & \leq ( 1 +  \frac{ | \mathcal{T}_{0}|}{ M} ) \| \mathbf{h}_{\mathcal{T}_{0}} \|_{2}^{2} + \frac{2 \gamma U | \mathcal{T}_{0}|^{3/2}}{ M}   \| \mathbf{h}_{\mathcal{T}_{0}} \|_{2}  \nonumber\\
 &+ \frac{(\gamma U |\mathcal{T}_0|)^{2}}{M}.
 \end{align}
 From the restricted isometry hypothesis, cf. Def. \ref{def:isometry}, we get
 \begin{align}  
 \label{eq:first}
\| \mathbf{A} \mathbf{h} \|_{2} &= \| \mathbf{A}_{\mathcal{T}_{01}} \mathbf{h}_{\mathcal{T}_{01}} + \sum_{j \geq 2}  \mathbf{A}_{\mathcal{T}_{j}} \mathbf{h}_{\mathcal{T}_{j}} \|_{2}\nonumber\\
& \geq  \| \mathbf{A}_{\mathcal{T}_{01}} \mathbf{h}_{\mathcal{T}_{01}} \|_{2} - \| \sum_{j \geq 2}  \mathbf{A}_{\mathcal{T}_{j}} \mathbf{h}_{\mathcal{T}_{j}} \|_{2} \nonumber\\ 
&\geq  \| \mathbf{A}_{\mathcal{T}_{01}} \mathbf{h}_{\mathcal{T}_{01}} \|_{2} -  \sum_{j \geq 2} \| \mathbf{A}_{\mathcal{T}_{j}} \mathbf{h}_{\mathcal{T}_{j}} \|_{2} \nonumber\\
&\geq \sqrt{1-\delta_{M+|\mathcal{T}_{0}|}} \| \mathbf{h}_{\mathcal{T}_{01}} \|_{2} -  \sqrt{1+\delta_{M}} \sum_{j \geq 2} \|  \mathbf{h}_{\mathcal{T}_{j}} \|_{2} \nonumber\\
&\geq \sqrt{1-\delta_{M+|\mathcal{T}_{0}|}} \| \mathbf{h}_{\mathcal{T}_{0}} \|_{2} -  \sqrt{1+\delta_{M}} \sum_{j \geq 2} \|  \mathbf{h}_{\mathcal{T}_{j}} \|_{2} 
\end{align}
where $\delta_{S} $ is a constant chosen such that the inequalities hold, which follows from inequality 
 (4) in \cite{ca:ro:te:05}. Here, $\mathbf{A}_\mathcal{T}$ denotes the columns of $\mathbf{A}$ corresponding to the index set $\mathcal{T}$.\\

 In analogy to \cite{ca:ro:te:05}, due to the ordering of the sets $\mathcal{T}_{j}$ by decreasing order of coefficients, we have:
 \[
 |\mathbf{h}_{\mathcal{T}_{j+1}(t)}| \leq  \| \mathbf{h}_{\mathcal{T}_{j}}\|_{1} / M 
 \]
 meaning each component in $\mathbf{h}_{\mathcal{T}_{j+1}}$ is smaller than the average of the components in $\mathbf{h}_{\mathcal{T}_{j}}$ (absolute value-wise). Thus, we get:
 \begin{align*}
  \| \mathbf{h}_{\mathcal{T}_{j+1}} \|^{2}_{2} & = \sum_{t \in \mathcal{T}_{j+1} } \|  \mathbf{h}_t  \|^{2}_{2} \\
  & \leq  \sum_{t \in \mathcal{T}_{j+1} }   \| \mathbf{h}_{\mathcal{T}_{j}}\|^{2}_{1} / M^{2} \\ 
   & \leq  M \| \mathbf{h}_{\mathcal{T}_{j}}\|^{2}_{1} / M^{2} = \| \mathbf{h}_{\mathcal{T}_{j}}\|^{2}_{1} / M,
\end{align*}
and
 \begin{align}
 \label{eq:sec}
  \sum_{j \geq 2 }   \| \mathbf{h}_{\mathcal{T}_{j}}\|_{2} & \leq \sum_{j \geq 1 }   \| \mathbf{h}_{\mathcal{T}_{j}}\|_{1} / \sqrt{M}\nonumber\\
 & =  \| \mathbf{h}_{\mathcal{T}_{0}^{C}} \|_{1} / \sqrt{M} \nonumber\\
 & \leq^{\textnormal{\eqref{eq:coneconstr}}} (\| \mathbf{h}_{\mathcal{T}_{0}} \|_1 + \gamma U |\mathcal{T}_0|)/\sqrt{M}\nonumber\\
 & \leq  \sqrt{{| \mathcal{T}_0 |/ M}} \| \mathbf{h}_{\mathcal{T}_{0}} \|_2 + \gamma U |\mathcal{T}_0|/\sqrt{M}
\end{align}
where the last step follows from the norm inequality. Combining Eq.~(\ref{eq:sec}) and Eq.~(\ref{eq:first}), we get:
 \begin{align}  
 \label{eq:fin}
\| A \mathbf{h} \|_{2} \geq & \sqrt{1-\delta_{M+|\mathcal{T}_{0}|}} \| \mathbf{h}_{\mathcal{T}_{0}} \|_{2} \nonumber\\
&-  \sqrt{1+\delta_{M}}  \sqrt{{| \mathcal{T}_0 |/ M}} \| \mathbf{h}_{\mathcal{T}_{0}} \|_2\nonumber\\
& -  \gamma U |\mathcal{T}_0| \sqrt{1+\delta_{M}} /\sqrt{M}
\end{align}
and subsequently:
 \begin{align}  
 \label{eq:fin}
\| \mathbf{h}_{\mathcal{T}_{0}} \|_{2} \leq & \frac{\| A\mathbf{h} \|_2+  \gamma U |\mathcal{T}_0| \sqrt{1+\delta_{M}} /\sqrt{M}}{\sqrt{1-\delta_{M+|\mathcal{T}_{0}|}}  -  \sqrt{1+\delta_{M}}  \sqrt{{| \mathcal{T}_0 |/ M}} } \\
\leq & \frac{4\eta f_0 \sqrt{M}+ \gamma f_0 |\mathcal{T}_0| \sqrt{1+\delta_{M}} }{\sqrt{M(1-\delta_{M+|\mathcal{T}_{0}|})}  -  \sqrt{|\mathcal{T}_0|(1+\delta_{M})}} \nonumber\\
& = C f_0,
\end{align}
if the denominator is greater than zero. Replacing this result in Eq.~(\ref{eq:ernorm}) and taking $U=f_0$ we get:
 \begin{equation}
 \| \mathbf{h} \|^{2}_{2}  \leq ( 1 +  \frac{ | \mathcal{T}_{0}|}{ M} ) C^{2}f_0^2 + 2\gamma \frac{ | \mathcal{T}_{0}|^{3/2}}{ M}  C f_0^2  + \gamma^2 \frac{ |\mathcal{T}_0|^{2}}{M} f_0^2,
 \end{equation}
 which is equivalent to~(\ref{eq:bound}) and thus completes the proof.
   \end{proof}

\end{appendix}

\bibliographystyle{IEEEtran}
\bibliography{int_depth}
\end{document}